\renewcommand{\Pr}{\field{P}}
\newcommand{\bta}{\boldsymbol{\eta}}
\newcommand{\bps}{\boldsymbol{\epsilon}}
\newcommand{\bb}{\boldsymbol{b}}
\newcommand{\ba}{\boldsymbol{a}}
\newcommand{\bm}{\boldsymbol{m}}
\newcommand{\bg}{\boldsymbol{g}}
\newcommand{\bx}{\boldsymbol{x}}
\newcommand{\by}{\boldsymbol{y}}
\newcommand{\field}[1]{\mathbb{#1}}
\newcommand{\R}{\field{R}}
\newcommand{\E}{\field{E}}
\newtheorem{lemma}{Lemma}
\newtheorem{theorem}{Theorem}
\icmltitlerunning{A High Probability Analysis of Adaptive SGD with Momentum}
\begin{document}
	
\twocolumn[
\icmltitle{A High Probability Analysis of Adaptive SGD with Momentum}
\icmlsetsymbol{equal}{*}

\begin{icmlauthorlist}
\icmlauthor{Xiaoyu Li}{se}
\icmlauthor{Francesco Orabona}{se,ee}
\end{icmlauthorlist}

\icmlaffiliation{se}{Division of System Engineering, Boston University, MA, USA}
\icmlaffiliation{ee}{Electrical \& Computer Engineering, Boston University, MA, USA}

\icmlcorrespondingauthor{Xiaoyu Li}{xiaoyuli@bu.edu}

\icmlkeywords{Machine Learning, ICML}
\vskip 0.3in
]

\printAffiliationsAndNotice{} 

\begin{abstract}
Stochastic Gradient Descent (SGD) and its variants are the most used algorithms in machine learning applications. In particular, SGD with adaptive learning rates and momentum is the industry standard to train deep networks. Despite the enormous success of these methods, our theoretical understanding of these variants in the nonconvex setting is not complete, with most of the results only proving convergence in expectation and with strong assumptions on the stochastic gradients. In this paper, we present a high probability analysis for adaptive and momentum algorithms, under weak assumptions on the function, stochastic gradients, and learning rates. We use it to prove for the first time the convergence of the gradients to zero in high probability in the smooth nonconvex setting for Delayed AdaGrad with momentum.
\end{abstract}
	
\section{Introduction}
\label{sec:intro}
Despite the incredible popularity of stochastic gradient methods in practical machine learning applications, our theoretical understanding of these methods is still not complete. In particular, \emph{adaptive learning rates} methods like AdaGrad~\citep{DuchiHS11} have been mainly studied in the convex domain, with few analyses in the non-convex domain~\citep{LiO19,WardWB18}. However, even in these latter analyses, the assumptions used are very strong and/or the results limited.

In particular, there are two main problems with the previous analyses of Stochastic Gradient Descent (SGD) and its variants in the nonconvex setting. First, the classic analysis of convergence for SGD in the nonconvex setting uses an analysis in expectation. However, expectation bounds do not rule out extremely bad outcomes. As pointed out by \citet{HarveyLPR19}, it is a misconception that for the algorithms who have expectation bounds it is enough to pick the best of several independent runs to have a high probability guarantee: It can actually be a computational inefficient procedure. Moreover, in practical applications like deep learning, it is often the case that only one run of the algorithm is used since that the training process may take long time. Hence, it is essential to get high probability bounds which guarantee the performance of the algorithm on single runs.

Another very common assumption used in most of the previous papers is the one of bounded stochastic gradients. This is a rather strong assumption and it is false even in the deterministic optimization of a convex quadratic function, e.g., $f(x)=x^2$.

In this work, we overcome both these problems. We prove \emph{high probability convergence rates only assuming that the noises on the gradients are well-behaved}, i.e., subgaussian. In this way, we allow for unbounded gradients and unbounded noise.
The weak assumptions, the nonconvex analysis, and the adaptive learning rates make our results particularly challenging to obtain.
Indeed, high probability bounds for bounded stochastic gradients are almost trivial to obtain but of limited applicability. Overall, we believe this paper is the first one to prove such guarantees.

\paragraph{Contributions.}
In this short paper, we present a high probability analysis of SGD with momentum and adaptive learning rates, with weak assumptions on function and stochastic gradients.
So, first in Theorem~\ref{thm:momentum_sqrt} we prove high probability bounds for the gradients of classic momentum SGD step size $O(\frac{1}{\sqrt{t}})$ in the nonconvex setting. 
Then, in Theorem~\ref{thm:momentum_adagrad} we prove \textit{for the first time} high probability convergence rates for the gradients of AdaGrad with momentum in the nonconvex setting. In particular, we also show that the high probability bounds are adaptive to the level of noise. 

\section{Related Work}
\label{sec:rel}

\textbf{Stochastic momentum methods.} \citet{SutskeverMDH13} discussed the importance of classic momentum methods in deep learning, which is nowadays widely used in the training of neural networks. On the convergence of stochastic momentum methods, \citet{YangLL16} studied a unified momentum method and provided expectation bounds in the rate of $O(\frac{1}{\sqrt{T}})$ in both convex and nonconvex setting. However, the results hold only for Lipschitz functions. \citet{GadatPS18} provided an in-depth description of the stochastic heavy-ball method. Moreover, for the non-convex functions, they showed some almost sure convergence results. \citet{LoizouR17} provided a general analysis for the momentum variants of several classes of stochastic optimization algorithms and proved the linear rate convergence for quadratic and smooth functions. To the best of our knowledge, there are no high probability bounds for nonconvex stochastic momentum methods without using strong assumptions. 

\textbf{Nonconvex convergence of adaptive methods.} In recent years, a variety of adaptive SGD algorithms have been developed to automatically tune the step size by using the past stochastic gradients. The first adaptive algorithm was AdaGrad~\citet{DuchiHS11}, designed to adapt to sparse gradients. \citet{LiO19} and \citet{WardWB18} showed the convergence of variants of AdaGrad with a rate of $O(\ln T /\sqrt{T})$ in the non-convex case. Moreover, \citet{LiO19} showed that AdaGrad with non-coordinate-wise learning rates is adaptive to the level of noise. \citet{ZouSJZL19} studied AdaGrad with a unified momentum and \citet{ChenLSH19} considered a large family of Adam-like algorithms~\citep{KingmaB15} including AdaGrad with momentum. Yet, all of these works prove on bounds in expectation and most of them use the very strong assumption of bounded stochastic gradients. 
%Based on AdaGrad, RMSProp~\citep{TielemanH12} and Adam~\citep{KingmaB15} were proposed using an exponential average of the square of the coordinates of the stochastic gradients. \citet{ReddiKK18} pointed out that Adam fails to converge to the optimum on simple 1-d convex problems. They proposed an alternative algorithm, AMSGrad, which uses the maximum of the exponential average throughout and proved an optimal convergence rate in expectation in the convex setting. 
%Many other works also studied the adaptive methods from different interesting aspects, see \citep{Zeiler12, ZaheerRSKK18, StaibRKKS19, ZhangKSVKRKS19,AgarwalBCHSZZ18}.
%On the theoretical investigations of these adaptive methods, \citet{LiO19} and \citet{WardWB18} showed the convergence of variants of AdaGrad with a rate of $O(\ln T /\sqrt{T})$ in the non-convex case. \citet{ZouSJZL19} studied AdaGrad with momentum. %\citet{ZhouTYCG18} generalized AMSGrad and provided a nonconvex convergence result. \citet{ChenLSH19} considered a large family of Adam-like algorithms but with a well-behaved stepsize as in AMSGrad. \citet{ZouSJSL18} shows sufficient conditions for the first-order convergence of Adam. Most recently, \citet{AlacaogluMMC20} gave a nonconvex convergence result for AMSGrad and \citet{DefossezBBU20} provided the analysis of Adam and AdaGrad with or without momentum in the nonconvex case. 
%, see Table~\ref{table:comparision}. 

\textbf{High probability bounds.} The results on high probability bounds are relatively rare compared to those in expectation, which are easier to obtain. \citet{KakadeT09} used Freeman's inequality to prove high probability bounds for an algorithm solving the SVM objective function. For classic SGD, \citet{HarveyLR19} and \citet{HarveyLPR19} used a generalized Freedman’s inequality to prove bounds in non-smooth and strongly convex case, while \citet{JainNN19} proved the optimal bound for the last iterate of SGD with high probability.  As far as we know, there are currently no high probability bounds for adaptive methods in the nonconvex setting. 

\section{Problem Set-Up}
\label{sec:def}

\paragraph{Notation.}
We denote vectors and matrices by bold letters. The coordinate j of a vector $\bx$ is denoted by $x_j$ and as $\nabla f(\bx)_j$ for the gradient $\nabla f(\bx)$. To keep the notation concise, all standard operations $\bx \by, \bx/\by, \bx^2, 1/\bx, \bx^{1/2}$ and $\max (\bx,\by)$ on the vectors $\bx$, $\by$ are supposed to be element-wise. We denote by $\E [\cdot]$ the expectation with respect to the underlying probability space and by $\E_t[\cdot]$ the conditional expectation with respect to the past randomness. We use $L_2$ norms.

\paragraph{Assumptions.}
In this paper we focus on the optimization problem 
\[
\min_{\bx \in \R^d} \ f(\bx),
\]
where $f$ is bounded from below and we denote its infimum by $f^\star$. We \emph{do not} assume the function to be convex.
We consider stochastic optimization algorithms that have access to a noisy estimate of the gradient of $f$. This covers the ubiquitous SGD~\citep{RobbinsM51}, as well modern variants as AdaGrad. We are interested in studying the convergence of the gradients to zero, because without additional assumptions it is the only thing we can study in the nonconvex setting.

We make the following assumption on the objective function $f(\bx)$:
\begin{itemize}
	\item[(\textbf{A})] $f$ is \emph{$M$-smooth}, that is, $f$ is differentiable and its gradient is $M$-Lipschitz, i.e. $\|\nabla f(\bx)- \nabla f(\by)\| \leq M \|\bx-\by\|, \ \forall \bx, \by \in \R^d$.
\end{itemize}
Note that (\textbf{A}), for all $\bx,\by \in \R^d$, implies ~\citep[Lemma 1.2.3]{Nesterov04}
\begin{equation}
\label{eq:smooth2}
\left|f(\by)-f(\bx)-\langle \nabla f(\bx), \by-\bx\rangle\right|
\leq \frac{M}{2}\|\by-\bx\|^2~.
\end{equation}
It is easy to see that this assumption is necessary to have the convergence of the gradients to zero. Indeed, without smoothness the norm of the gradients does not go to zero even in the convex case, e.g., consider the function $f(x)=|x-1|$.

We assume that we have access to a stochastic first-order black-box oracle, that returns a noisy estimate of the gradient of $f$ at any point $\bx \in \R^d$. That is, we will use the following assumption:
\begin{itemize}
	\item[(\textbf{B1})] We receive a vector $g(\bx,\xi)$ such that $\E_\xi [g(\bx,\xi)] = \nabla f(\bx)$ for any $\bx \in \R^d$.
\end{itemize}

We will also make the following assumption on the variance of the noise.
\begin{itemize}
	\item[(\textbf{B2})](\textit{Sub-Gaussian Noise}) The stochastic gradient satisfies 
	$\E_\xi \left[\exp\left(\|\nabla f(\bx) - g(\bx,\xi)\|^2/\sigma^2\right)\right]\leq \exp(1), \ \forall \bx$.
\end{itemize}
The condition (\textbf{B2}) has been used by \citet{NemirovskiJLS09} and \citet{HarveyLPR19} to prove high probability convergence guarantees.
%and by \citet{LiO19} to prove an almost sure guarantee.
Intuitively, it implies that the tails of the noise distribution are dominated by tails of a Gaussian distribution. Note that, by Jensen's inequality, this condition implies a bounded variance on the stochastic gradients.

\section{A General Analysis for Algorithms with Momentum}
\label{sec:lemma}
In this section, we will consider a generic stochastic optimization algorithm with Polyak's momentum~\citep{Polyak64,Qian99,SutskeverMDH13}, also known as the Heavy-ball algorithm or classic momentum, see Algorithm~\ref{alg:momentum}.

\begin{algorithm}
	\caption{Algorithms with Momentum}
	\label{alg:momentum}
	\begin{algorithmic}[1]
		\STATE \textbf{Input:} $\bm_0 = 0$, $ \{\bta_t\}_{t=1}^T$, $0 < \mu \leq 1$, $\bx_1 \in \R^d$
		\FOR{$t = 1, \dots, T$}
		\STATE Get stochastic gradient $\bg_t = g(\bx_t, \xi_t)$
		\STATE $\bm_t = \mu \bm_{t-1} + \bta_t \bg_t$
		\STATE $\bx_{t+1} = \bx_t - \bm_t$
		\ENDFOR
		%\RETURN $\bx_{T+1}$
	\end{algorithmic}
\end{algorithm}

\paragraph{Two forms of momentum, but not equivalent.}
First, we want to point out that there two forms of Heavyball algorithms are possible.
The first one is in Algorithm~\ref{alg:momentum}, while the second one is
\begin{equation}
\label{eq:momentum2}
\begin{split}
\bm_t &= \mu \bm_{t-1} + \bg_t, \\
\bx_{t+1} &= \bx_t - \bta_t \bm_t~.
\end{split}
\end{equation}
This second is used in many practical implementation, see, for example, PyTorch~\citep{PyTorch19}.
It would seem that there is no reason to prefer one over the other. However, here we argue that the classic form of momentum is the right one if we want to use adaptive learning rates. To see why, let's unroll the updates in both cases.
Using the update in Algorithm~\ref{alg:momentum}, we have
\[
\bx_{t+1} = \bx_t - \bta_t \bg_t - \mu \bta_{t-1} \bg_{t-1} - \mu^2 \bta_{t-1} \bg_{t-2} \dots,
\]
while using the update in \eqref{eq:momentum2}, we have
\[
\bx_{t+1} = \bx_t - \bta_t \bg_t - \mu \bta_{t} \bg_{t-1} - \mu^2 \bta_{t} \bg_{t-2} \dots~.
\]
In words, in the first case the update is composed by a sum of weighted gradients, each one multiplied by a learning rate we decided in the past. On the other hand, in the update \eqref{eq:momentum2} the update is composed by a sum of weighted gradients, each one multiplied by the \emph{current} learning rate. From the analysis point of view, the second update destroys the independence between the past and the future, introducing a dependency that breaks our analysis, unless we introduce very strict conditions on the gradients. On the other hand, the update in Algorithm~\ref{alg:momentum} allows us to carry out the analysis because each learning rate was chosen only with the knowledge of the past. Note that this is known problem in adaptive algorithms: the lack of independence between past and present is exactly the reason why Adam fails to converge on simple 1d convex problems, see for example the discussion in \citet{SavareseMBM19}.

It is interesting to note that usually people argue that these two types updates for momentum are usually considered equivalent. This seems indeed true only if the learning rates are not adaptive.

% \paragraph{Relationship with Nesterov Momentum.}
% Another variant of momentum is Nesterov's accelerated gradient (NAG), which can be represented as
% \[
% \bm_t = \mu \bm_{t-1} + \bta_t \bg(\bx_t + \mu \bm_{t-1})~.
% \]
% In words, instead of using the stochastic gradient of the current position, it checks the point which move a bit more towards the previous velocity. Here we study the classic momentum only. 
%In the deterministic case, both classic momentum and Nesterov's momentum have provably advantages than vanilla SGD, see \citep{Polyak64, Nesterov83}. 
%\paragraph{DOUBTS:} A counter example: \citet{KidambiNJK18} showed that despite the success of momentum methods in deterministic case, there exist simple problem instances where these methods cannot outperform SGD despite the best setting of its parameters.  
\paragraph{Assumptions on learning rates.} Note that in the pseudo-code we do not specify the learning rates $\bta_t \in \R^d$. In fact, our analysis covers the case of generic learning rates and adaptive ones too. We only need the following assumptions on the stepsizes $\bta_t$: 
\begin{itemize}
	\item[\textbf{(C1)}] $\bta_t$ is non-increasing, i.e., $\bta_{t+1} \leq \bta_t$, $\forall t$. 
	\item[\textbf{(C2)}] $\bta_t$ is independent with $\xi_t$.
\end{itemize}
The first assumption is very common \citep[e.g.,][]{DuchiHS11,ReddiKK18,LiO19,ChenLSH19,ZhouTYCG18}. Indeed, AdaGrad has the non-increasing step sizes by the definition. Also, \citet{ReddiKK18} have claimed that the main issue of the divergences of Adam and RMSProp lies in the positive definiteness of $1/\bta_t - 1/\bta_{t-1}$. 
%Note that there are other analysis that avoid this assumptions, as in \citep{}, but on the other hand they need the stronger assumptions of bounded stochastic gradients, see the discussion in Section~\ref{sec:rel}.

The need of the second assumption is technical and shared by similar analysis \citep{LiO19, SavareseMBM19}. Indeed, \citet{LiO19} showed that delayed step sizes can avoid the possible deviation brought by the step sizes that include the current noise. 
\paragraph{High probability guarantee.} Adaptive learning rates and in general learning rates that are decided using previous gradients become stochastic variables. This makes the high probability analysis more complex. Hence, we use a new concentration inequality for martingales in which the variance is treated as a random variable, rather than a deterministic quantity. We use this concentration in the proof of Lemma~\ref{lemma:momentum}. Our proof, in the Appendix, merges ideas from the related results in \citet[Theorem 1]{BeygelzimerLLRS11} and \citet[Lemma 2]{LanNS12}. A similar result has also been shown by \citet[Lemma~6]{JinNGKJ19}. 
\begin{lemma}
	\label{lemma:sub_gaussian}
	Assume that $Z_1, Z_2, ..., Z_T$ is a martingale difference sequence with respect to $\xi_1, \xi_2, ..., \xi_T$ and $\E_t \left[\exp(Z_t^2/\sigma_t^2)\right] \leq \exp(1)$ for all $1\leq t \leq T$, where $\sigma_t$ is a sequence of random variables with respect to $\xi_1, \xi_2, \dots, \xi_{t-1}$. 
	Then, for any fixed $\lambda > 0$ and $\delta \in (0,1)$, with probability at least $1-\delta$, we have 
	\[
	\sum_{t=1}^T Z_t \leq \frac{3}{4} \lambda \sum_{t=1}^T \sigma_t^2 + \frac{1}{\lambda} \ln \frac{1}{\delta}~.
	\]
\end{lemma}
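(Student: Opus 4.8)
The plan is to run the Chernoff/Cram\'er exponential method at the level of the whole sum, but with the twist that the variance proxy $V_t := \sum_{s=1}^t \sigma_s^2$ is itself random. Fix the $\lambda>0$ and $\delta$ from the statement, and set $S_t := \sum_{s=1}^t Z_s$ and $M_t := \exp\!\left(\lambda S_t - \tfrac34\lambda^2 V_t\right)$ with $M_0 := 1$. Since the hypothesis says $\sigma_t$ is measurable with respect to $\xi_1,\dots,\xi_{t-1}$, the factor $\exp(-\tfrac34\lambda^2\sigma_t^2)$ is known given the past, so $\E_t[M_t] = M_{t-1}\exp(-\tfrac34\lambda^2\sigma_t^2)\,\E_t[\exp(\lambda Z_t)]$. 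Everything therefore reduces to the one-step conditional bound $\E_t[\exp(\lambda Z_t)] \le \exp(\tfrac34\lambda^2\sigma_t^2)$, which is exactly what makes $\{M_t\}$ a supermartingale.

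To prove this one-step bound I would split on the size of $\lambda^2\sigma_t^2$, since no single elementary estimate covers all $\lambda$. In the small regime I would exploit the zero-mean property through a scalar inequality of the form $e^u \le u + e^{c u^2}$, valid for all real $u$ with a suitable constant $c<\tfrac34$: substituting $u=\lambda Z_t$ and taking $\E_t$ annihilates the linear term because $\E_t[Z_t]=0$, leaving $\E_t[\exp(\lambda Z_t)] \le \E_t[\exp(c\lambda^2 Z_t^2)]$. Writing this as $\E_t[(\exp(Z_t^2/\sigma_t^2))^{c\lambda^2\sigma_t^2}]$ and applying Jensen (the map $x\mapsto x^{c\lambda^2\sigma_t^2}$ is concave once $c\lambda^2\sigma_t^2\le1$) together with assumption (\textbf{B2}) bounds this by $\exp(c\lambda^2\sigma_t^2)\le\exp(\tfrac34\lambda^2\sigma_t^2)$. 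In the large regime I would instead complete the square, $\lambda Z_t \le Z_t^2/\sigma_t^2 + \tfrac14\lambda^2\sigma_t^2$, which does not use the mean but gives $\E_t[\exp(\lambda Z_t)] \le \exp(\tfrac14\lambda^2\sigma_t^2)\,\E_t[\exp(Z_t^2/\sigma_t^2)] \le \exp(1+\tfrac14\lambda^2\sigma_t^2)$; once $\lambda^2\sigma_t^2$ is large the additive $1$ is absorbed into the slack between $\tfrac14$ and $\tfrac34$. The constant $c$ in the first inequality should be tuned (e.g.\ $c=\tfrac{9}{16}$) precisely so that the threshold up to which Jensen is legal in the small regime coincides with the threshold from which the second estimate already delivers $\tfrac34\lambda^2\sigma_t^2$, so the two regimes meet with no gap.

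Given the one-step bound, $\{M_t\}$ is a supermartingale, whence $\E[M_T]\le\cdots\le M_0=1$. Markov's inequality then gives $\Pr[M_T\ge 1/\delta]\le\delta\,\E[M_T]\le\delta$, so with probability at least $1-\delta$ we have $\lambda S_T - \tfrac34\lambda^2 V_T < \ln(1/\delta)$; dividing by $\lambda>0$ and rearranging yields $\sum_t Z_t \le \tfrac34\lambda\sum_t\sigma_t^2 + \tfrac1\lambda\ln\tfrac1\delta$, as claimed. The main obstacle is the one-step MGF estimate holding \emph{uniformly in} $\lambda$: the naive route via $e^u\le u+e^{cu^2}$ forces control of $\E_t[\exp(c\lambda^2 Z_t^2)]$, which under (\textbf{B2}) alone can be infinite once $c\lambda^2\sigma_t^2>1$, so the large-$\lambda$ case genuinely needs the separate complete-the-square argument and the constant $c$ has to be calibrated so the two ranges overlap. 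A secondary but indispensable point is the measurability bookkeeping—using predictability of $\sigma_t$ to pull $\exp(-\tfrac34\lambda^2\sigma_t^2)$ out of $\E_t$—without which $\{M_t\}$ would fail to be a supermartingale.
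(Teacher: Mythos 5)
Your architecture is exactly the paper's: form the supermartingale $Y_t=\exp\left(\lambda\sum_{s\le t}Z_s-\tfrac34\lambda^2\sum_{s\le t}\sigma_s^2\right)$, use predictability of $\sigma_t$ to reduce everything to the one-step bound $\E_t\left[\exp(\lambda Z_t)\right]\le\exp\left(\tfrac34\lambda^2\sigma_t^2\right)$, prove that bound by splitting on the size of $\kappa:=\lambda\sigma_t$, and finish with Markov's inequality. The only place your write-up does not close is the claim that the two regimes ``meet with no gap.'' With $c=9/16$ the pointwise inequality $e^u\le u+e^{cu^2}$ plus Jensen covers $|\kappa|\le 1/\sqrt{c}=4/3$, while your completion of the square $\lambda Z_t\le Z_t^2/\sigma_t^2+\tfrac14\lambda^2\sigma_t^2$ needs $1+\tfrac14\kappa^2\le\tfrac34\kappa^2$, i.e.\ $|\kappa|\ge\sqrt{2}$, leaving the interval $(4/3,\sqrt{2})$ uncovered. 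Moreover the remedy you propose---retuning $c$---cannot work: expanding at $0$ shows that $e^u\le u+e^{cu^2}$ for all real $u$ forces $c\ge 1/2$ (in fact $c>1/2$, since at $c=1/2$ the functions agree to second order and the third-order term makes the inequality fail for small $u>0$), so the small regime can never be stretched to reach $\sqrt{2}$. The knob to turn is in the \emph{second} regime: use the softer split $\lambda Z_t\le \tfrac{2}{3}Z_t^2/\sigma_t^2+\tfrac38\lambda^2\sigma_t^2$, bound $\E_t\left[\exp\left(\tfrac23 Z_t^2/\sigma_t^2\right)\right]\le e^{2/3}$ by Jensen (legal since $2/3\le 1$), and the requirement $\tfrac23+\tfrac38\kappa^2\le\tfrac34\kappa^2$ becomes $|\kappa|\ge 4/3$, which meets the first regime exactly; this is precisely the paper's choice. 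Everything else in your argument---the measurability bookkeeping, the supermartingale and Markov steps---is correct and matches the paper.
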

\paragraph{Main result.} We can now present our general lemma, that allows to analyze SGD with momentum with adaptive learning rates. We will then instantiate it for particular examples.
\begin{lemma}
	\label{lemma:momentum}
	Assume (\textbf{A}, \textbf{B1}, \textbf{B2}, \textbf{C1}, \textbf{C2}). Then, for any $\delta \in (0,1)$, with probability at least $1-\delta$, the iterates of Algorithm \ref{alg:momentum} satisfy
	\begin{align*}
	\sum_{t=1}^{T} &\langle \bta_t, \nabla f(\bx_t) ^2 \rangle 
	\leq \frac{3 \|\bta_1\|\sigma^2(1- \mu^T)^2 }{(1- \mu)^2}\ln \frac{1}{\delta} \\
	&\quad + 2(f(\bx_1) - f^{\star}) 
	+ \frac{M(3-\mu)}{1- \mu}\sum_{t=1}^{T}  \| \bta_t\bg_t \|^2
	~. 
	\end{align*}
\end{lemma}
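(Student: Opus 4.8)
The plan is to eliminate the momentum cross-terms by passing to the auxiliary sequence $\bz_t \defeq \bx_t - \frac{\mu}{1-\mu}\bm_{t-1}$ (so that $\bz_1 = \bx_1$ since $\bm_0 = \bzero$), the standard device that turns Polyak's momentum into a plain SGD-type recursion. Using $\bm_t = \mu\bm_{t-1}+\bta_t\bg_t$ and $\bx_{t+1}=\bx_t-\bm_t$, a direct computation gives the clean one-step identity $\bz_{t+1}-\bz_t = -\frac{1}{1-\mu}\bta_t\bg_t$, with no leftover $\bm$ terms. This cancellation is the crux on which the whole argument rests, and it is exactly why (as the discussion before the lemma stresses) we work with the form in Algorithm~\ref{alg:momentum} rather than with \eqref{eq:momentum2}: here each learning rate $\bta_t$ multiplies only past noise.

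Next I would apply the smoothness descent inequality \eqref{eq:smooth2} to consecutive iterates of $\bz_t$, obtaining $f(\bz_{t+1}) \le f(\bz_t) - \frac{1}{1-\mu}\langle \nabla f(\bz_t),\bta_t\bg_t\rangle + \frac{M}{2(1-\mu)^2}\norm{\bta_t\bg_t}^2$. Summing, telescoping, and using $f(\bz_{T+1})\ge f^\star$ and $\bz_1=\bx_1$ bounds $\frac{1}{1-\mu}\sum_t \langle\nabla f(\bz_t),\bta_t\bg_t\rangle$ by $(f(\bx_1)-f^\star)+\frac{M}{2(1-\mu)^2}\sum_t\norm{\bta_t\bg_t}^2$. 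I would then split this inner product into three pieces via $\bg_t = \nabla f(\bx_t) + (\bg_t-\nabla f(\bx_t))$ and $\nabla f(\bz_t) = \nabla f(\bx_t)+(\nabla f(\bz_t)-\nabla f(\bx_t))$: a \emph{signal} term $\langle \bta_t,\nabla f(\bx_t)^2\rangle$ (the quantity we want on the left-hand side), a \emph{drift} term from $\nabla f(\bz_t)-\nabla f(\bx_t)$, and a \emph{noise} term involving $\bg_t-\nabla f(\bx_t)$.

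For the drift term I would invoke $M$-smoothness, $\norm{\nabla f(\bz_t)-\nabla f(\bx_t)} \le M\norm{\bz_t-\bx_t} = \frac{M\mu}{1-\mu}\norm{\bm_{t-1}}$, together with $\bm_{t-1}=\sum_{s<t}\mu^{t-1-s}\bta_s\bg_s$, and control the resulting convolution $\sum_t \norm{\bm_{t-1}}\,\norm{\bta_t\bg_t}$ through Young's inequality and the geometric-series estimate $\sum_t\norm{\bm_{t-1}}^2 \le \frac{1}{(1-\mu)^2}\sum_t\norm{\bta_t\bg_t}^2$. Part of this is charged to the $\sum_t\norm{\bta_t\bg_t}^2$ budget and, via a weighted Young step, part is charged back against the signal term; this is the stage that assembles the coefficient $\frac{M(3-\mu)}{1-\mu}$ and demands the most careful bookkeeping of powers of $(1-\mu)$.

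The main obstacle is the noise term $-\frac{1}{1-\mu}\sum_t \langle \bta_t\nabla f(\bx_t),\,\bg_t-\nabla f(\bx_t)\rangle$, which is precisely where Lemma~\ref{lemma:sub_gaussian} enters. Writing $Z_t = -\frac{1}{1-\mu}\langle \bta_t\nabla f(\bx_t),\,\bg_t-\nabla f(\bx_t)\rangle$, assumption (\textbf{C2}) makes $\bta_t$ (hence $\bx_t$, and also $\bz_t$) measurable with respect to $\xi_1,\dots,\xi_{t-1}$, so (\textbf{B1}) makes the $Z_t$ a martingale difference sequence; Cauchy--Schwarz gives $Z_t^2 \le \frac{1}{(1-\mu)^2}\norm{\bta_t\nabla f(\bx_t)}^2\norm{\bg_t-\nabla f(\bx_t)}^2$, so by (\textbf{B2}) the choice $\sigma_t^2 = \frac{\sigma^2}{(1-\mu)^2}\norm{\bta_t\nabla f(\bx_t)}^2$ meets the hypothesis $\E_t[\exp(Z_t^2/\sigma_t^2)]\le \exp(1)$. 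The delicate point is that this variance proxy is itself random \emph{and} involves the unbounded gradient, so after applying the lemma I would bound $\sum_t\sigma_t^2$ using the coordinatewise inequality $\eta_{t,j}^2 \le \eta_{1,j}\eta_{t,j} \le \norm{\bta_1}\,\eta_{t,j}$ (valid under (\textbf{C1}), since each coordinate of $\bta_1$ is at most $\norm{\bta_1}$) to get $\sum_t\sigma_t^2 \le \frac{\sigma^2\norm{\bta_1}}{(1-\mu)^2}\sum_t\langle \bta_t,\nabla f(\bx_t)^2\rangle$, and then absorb the resulting $\frac34\lambda\sum_t\sigma_t^2$ back into the signal term. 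Choosing $\lambda$ to leave a fixed fraction of the signal on the left-hand side turns the residual $\frac1\lambda\ln\frac1\delta$ into the stated $\norm{\bta_1}$-dependent high-probability term; collecting all pieces and rescaling then yields the claimed inequality. (One may equivalently keep $\nabla f(\bz_t)$ in the noise term, at the cost of relating $\langle\bta_t,\nabla f(\bz_t)^2\rangle$ back to the signal, which is what produces the $(1-\mu^T)^2/(1-\mu)^2$ shape of the constant.)
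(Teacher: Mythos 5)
Your proposal is sound in structure but takes a genuinely different route from the paper, and the two do not land on the same constants. The paper works directly with $\bx_t$: it applies smoothness to get $f(\bx_{t+1})-f(\bx_t)\le -\langle\nabla f(\bx_t),\bm_t\rangle+\frac{M}{2}\|\bm_t\|^2$, unrolls $-\langle\nabla f(\bx_t),\bm_t\rangle$ through the recursion $\bm_t=\mu\bm_{t-1}+\bta_t\bg_t$ (paying $\mu M\|\bm_i\|^2$ at each unrolling step for the shift from $\nabla f(\bx_i)$ to $\nabla f(\bx_{i-1})$), and then collapses the resulting double sums with a dedicated sum-swap lemma; this is exactly why the martingale increments there carry the weights $\frac{1-\mu^{T-t+1}}{1-\mu}$ and why $(1-\mu^T)^2/(1-\mu)^2$ appears in the final bound. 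Your auxiliary sequence $\bz_t=\bx_t-\frac{\mu}{1-\mu}\bm_{t-1}$ (the identity $\bz_{t+1}-\bz_t=-\frac{1}{1-\mu}\bta_t\bg_t$ does check out) sidesteps both the unrolling and the sum swap, produces uniformly weighted martingale increments (so your $\ln\frac{1}{\delta}$ term is in fact no worse than the stated one, since $(1-\mu^T)^2/(1-\mu)^2\ge 1$), and trades the paper's unrolling cost for an explicit drift term $\langle\nabla f(\bz_t)-\nabla f(\bx_t),\bta_t\bg_t\rangle$. Your martingale step is handled exactly as in the paper: same martingale difference sequence justified by (\textbf{C2}) and (\textbf{B1}), same random variance proxy, same $\eta_{t,j}^2\le\|\bta_1\|\,\eta_{t,j}$ device, same absorption via the choice of $\lambda$. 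The one point that needs attention is the drift bookkeeping you wave at: after Young's inequality on $\|\bm_{t-1}\|\,\|\bta_t\bg_t\|$ and the geometric bound $\sum_t\|\bm_{t-1}\|^2\le\frac{1}{(1-\mu)^2}\sum_t\|\bta_t\bg_t\|^2$, the coefficient you obtain on $\sum_t\|\bta_t\bg_t\|^2$ unavoidably contains a contribution of order $\frac{\mu M}{(1-\mu)^3}$, which is not dominated by the stated $\frac{M(3-\mu)}{1-\mu}$ except for small $\mu$; since Theorems~\ref{thm:momentum_sqrt} and~\ref{thm:momentum_adagrad} tune $c$ and $\alpha$ precisely against this coefficient, you would need to restate the lemma (or the downstream step-size conditions) with your constant rather than claim the inequality verbatim. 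This is a discrepancy in constants and their $\mu$-dependence, not a gap in the logic: every structural step of your argument is valid and yields a bound of the same form.
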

Lemma~\ref{lemma:momentum} accomplishes the task of upper bounding the inner product $\sum_{t=1}^{T}\langle \bta_t \nabla f(\bx_t), \bm_t \rangle$. Then, it is easy to lower bound the l.h.s by $\sum_{t=1}^{T} \langle \bta_T, \nabla f(\bx_t)^2 \rangle$ using the assumption \textbf{(C1)}, followed by the upper bound of $\sum_{t=1}^{T} \| \bta_t\bg_t \|^2$ based on the setting of $\bta_t$.

\subsection{SGD with Momentum with $\frac{1}{\sqrt{t}}$ Learning Rates}
\label{ssec:sqrt}
As a warm-up, we now use Lemma~\ref{lemma:momentum} to prove a high probability convergence guarantee for the simple case of deterministic learning rates of $\eta_{t,i}=\frac{c}{\sqrt{t}}$.
\begin{theorem}
	\label{thm:momentum_sqrt}
	Let $T$ the number of iterations of Algorithm~\ref{alg:momentum}.
	Assume (\textbf{A}, \textbf{B1}, \textbf{B2}). Set step size $\bta_t$ as $ \eta_{t,i} = \frac{c}{\sqrt{t}}, i= 1, \cdots, d$, where $ c \leq \frac{1- \mu^T}{4M(3-2\mu)}$. Then, for any $\delta \in (0,1)$, with probability at least $1-\delta$, the iterates of Algorithm \ref{alg:momentum} satisfy
	\begin{align*}
	\min_{1\leq t \leq T} \| \nabla f(\bx_t) \|^2
	& \leq \frac{4(f(\bx_1) - f^{\star})}{c \sqrt{T}} +  \frac{6(1- \mu^T)^2\sigma^2}{(1-\mu)^2 \sqrt{T}} \\
	& \quad + \frac{4(3-\mu)cM\sigma^2 \ln \frac{2Te}{\delta} \ln T }{(1- \mu)\sqrt{T}} ~.
	\end{align*}
\end{theorem}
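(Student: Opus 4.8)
The plan is to specialize Lemma~\ref{lemma:momentum} to the deterministic step sizes $\eta_{t,i}=c/\sqrt{t}$ and then turn the weighted sum of squared gradients on its left-hand side into a statement about $\min_{1\le t\le T}\|\nabla f(\bx_t)\|^2$. Because all coordinates of $\bta_t$ share the common value $c/\sqrt{t}$, the inner product collapses to $\langle\bta_t,\nabla f(\bx_t)^2\rangle=\frac{c}{\sqrt{t}}\|\nabla f(\bx_t)\|^2$, and since $1/\sqrt{t}\ge 1/\sqrt{T}$ for $t\le T$, the left-hand side is at least $\frac{c}{\sqrt{T}}\sum_{t=1}^T\|\nabla f(\bx_t)\|^2\ge c\sqrt{T}\,\min_{1\le t\le T}\|\nabla f(\bx_t)\|^2$. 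Dividing through by this factor is what will produce the $1/\sqrt{T}$ rate and, from the $2(f(\bx_1)-f^\star)$ summand, the first term of the bound.

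The crux is the remaining right-hand term $\frac{M(3-\mu)}{1-\mu}\sum_{t=1}^T\|\bta_t\bg_t\|^2=\frac{M(3-\mu)}{1-\mu}c^2\sum_{t=1}^T\frac{\|\bg_t\|^2}{t}$, which is still random and still contains the gradient signal we are trying to bound. Writing $\bg_t=\nabla f(\bx_t)+\bxi_t$ with $\bxi_t=\bg_t-\nabla f(\bx_t)$ and using $\|\bg_t\|^2\le 2\|\nabla f(\bx_t)\|^2+2\|\bxi_t\|^2$, I would split this term into a signal part $\frac{2M(3-\mu)c^2}{1-\mu}\sum_t\frac{\|\nabla f(\bx_t)\|^2}{t}$ and a noise part $\frac{2M(3-\mu)c^2}{1-\mu}\sum_t\frac{\|\bxi_t\|^2}{t}$. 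The signal part, weighted by $1/t\le 1/\sqrt{t}$, can be moved back to the left-hand side; the upper bound imposed on $c$ is precisely what forces the resulting coefficient of each $\|\nabla f(\bx_t)\|^2$ to remain at least $\tfrac12\cdot\frac{c}{\sqrt{t}}$, so that after absorption the left-hand side is still bounded below by $\tfrac12 c\sqrt{T}\min_t\|\nabla f(\bx_t)\|^2$. This extra factor $\tfrac12$ is what converts $2(f(\bx_1)-f^\star)$ into the $4(f(\bx_1)-f^\star)/(c\sqrt{T})$ term.

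For the noise part I would obtain a uniform-in-$t$ tail bound on $\|\bxi_t\|^2$ rather than reuse the martingale inequality of Lemma~\ref{lemma:sub_gaussian}. Assumption (\textbf{B2}) with Markov's inequality gives $\Pr\!\left(\|\bxi_t\|^2>\sigma^2\ln\frac{2Te}{\delta}\right)\le\frac{\delta}{2T}$, and a union bound over $t=1,\dots,T$ shows that with probability at least $1-\delta/2$ one has $\|\bxi_t\|^2\le\sigma^2\ln\frac{2Te}{\delta}$ simultaneously for all $t$. On this event $\sum_{t=1}^T\frac{\|\bxi_t\|^2}{t}\le\sigma^2\ln\frac{2Te}{\delta}\sum_{t=1}^T\frac1t\le\sigma^2\ln\frac{2Te}{\delta}\ln T$ up to a constant, and pushing $\frac{2M(3-\mu)c^2}{1-\mu}$ times this through the division by $\tfrac12 c\sqrt{T}$ reproduces the third term. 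The concentration already built into Lemma~\ref{lemma:momentum}, invoked at confidence $1-\delta/2$, supplies the remaining $\sigma^2(1-\mu^T)^2/(1-\mu)^2$ noise-floor contribution, i.e. the second term.

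Finally I would merge the two failure events---Lemma~\ref{lemma:momentum} at level $\delta/2$ and the uniform noise bound at level $\delta/2$---by one more union bound so that everything holds with probability at least $1-\delta$, collect the three pieces, and divide by $\tfrac12 c\sqrt{T}$. I expect the absorption step to be the main obstacle: the weight $1/t$ multiplying $\|\bg_t\|^2$ decays strictly faster than the learning rate $1/\sqrt{t}$ on the left, and it is reconciling this mismatch---keeping the residual coefficient nonnegative and at least half of $c/\sqrt{t}$ while carefully carrying the momentum-dependent factors---that pins down the admissible range of $c$ and is the delicate part of the calculation.
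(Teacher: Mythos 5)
Your proposal follows essentially the same route as the paper's proof: specialize Lemma~\ref{lemma:momentum}, split $\|\bta_t\bg_t\|^2$ via $\|\bg_t\|^2\le 2\|\nabla f(\bx_t)\|^2+2\|\bg_t-\nabla f(\bx_t)\|^2$, absorb the signal part into the left-hand side using the bound on $c$, and control the noise part with a uniform-in-$t$ sub-Gaussian tail bound (your Markov-plus-union-bound step is exactly the paper's Lemma~\ref{lemma:max_bound}) before a final union bound and division by $\tfrac12 c\sqrt{T}$. The argument is correct and matches the paper's.
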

The proof is in Appendix.
\subsection{AdaGrad with Momentum}
\label{sec:adagrad}
Now, we are going to prove the convergence rate of a variant AdaGrad in which we use momentum and learning rates that do not contain the current gradient. That is, the step sizes are defined as $\bta_t = (\eta_{t,j})_{j=1,\dots,d}$
\begin{equation}
\label{eq:bta}
\eta_{t,j}=\frac{\alpha}{\sqrt{\beta+ \sum_{i=1}^{t-1} g_{i,j}^2}}, \ j=1, \cdots,d,
\end{equation}
where $\alpha, \beta >0$. Removing the current gradient from the learning rate was proposed in \citet{LiO19, SavareseMBM19}. Following the naming style in \cite{SavareseMBM19}, we denote this variant by Delayed AdaGrad. 

Obviously, \eqref{eq:bta} satisfies \textbf{(C1)} and \textbf{(C2)}. Hence, we are able to employ Lemma~\ref{lemma:momentum} to analyze this variant. Moreover, for Delayed AdaGrad, we upper bound $ \sum_{t=1}^T \| \bta_t \bg_t \| ^2$ with the following lemma, whose proof is in the Appendix.
\begin{lemma}
	\label{lemma:bound_1}
	Assume (\textbf{A}, \textbf{B1}, \textbf{B2}). Let $\bta_t$ set as in \eqref{eq:bta}, where $\alpha, \beta > 0$. Then, for any $\delta \in (0,1)$, with probability at least $1-\delta$, we have 
	\begin{align*}
	& \sum_{t=1}^T \| \bta_t \bg_t \| ^2  
	\leq \frac{4d\alpha^2\sigma^2}{\beta} \ln \frac{2Te}{\delta} + \frac{4\alpha }{\sqrt{\beta}} \sum_{t=1}^{T}  \langle \bta_t, \nabla f(\bx_t)^2\rangle\\
	& \leq 2\alpha^2d \ln \left( \sqrt{ \beta + \frac{2T\sigma^2 \ln \tfrac{2Te}{\delta}}{d} } + \sqrt{\frac{2}{d}\sum_{t=1}^T \|\nabla f(\bx_t) \|^2} \right).
	\end{align*}
\end{lemma}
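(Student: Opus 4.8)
The plan is to treat each summand $\|\bta_t\bg_t\|^2 = \alpha^2\sum_{j=1}^d g_{t,j}^2/(\beta+\sum_{i=1}^{t-1}g_{i,j}^2)$ by separating the contribution of the true gradient from that of the noise. Writing $\bps_t\defeq\bg_t-\nabla f(\bx_t)$, assumption (\textbf{B1}) gives $\E_t[\bps_t]=\bzero$ and (\textbf{B2}) gives $\E_t[\exp(\|\bps_t\|^2/\sigma^2)]\le\exp(1)$; crucially, each $\bta_t$ is measurable with respect to $\bg_1,\dots,\bg_{t-1}$ and is therefore predictable. Expanding $g_{t,j}^2=\nabla f(\bx_t)_j^2+2\nabla f(\bx_t)_j\epsilon_{t,j}+\epsilon_{t,j}^2$ produces a signal term, a zero-mean cross term, and a pure-noise term, which I would bound separately.

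For the first inequality I would keep the signal term essentially intact: since the accumulator in the denominator is at least $\beta$, one has $1/(\beta+\sum_{i<t}g_{i,j}^2)\le 1/(\sqrt\beta\,\sqrt{\beta+\sum_{i<t}g_{i,j}^2})$, which turns the signal term into $\tfrac{\alpha}{\sqrt\beta}\eta_{t,j}\nabla f(\bx_t)_j^2$ and, summed over $j$ and $t$, into a multiple of $\sum_t\langle\bta_t,\nabla f(\bx_t)^2\rangle$. This term is deliberately \emph{not} telescoped, because once this estimate is combined with Lemma~\ref{lemma:momentum} and $\alpha$ is taken small enough, the two copies of $\sum_t\langle\bta_t,\nabla f(\bx_t)^2\rangle$ can be moved to the same side — this is the whole point of the first inequality. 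The cross term is a martingale difference sequence (the stepsize and $\nabla f(\bx_t)$ are predictable and $\E_t[\epsilon_{t,j}]=0$), so I would apply the concentration inequality of Lemma~\ref{lemma:sub_gaussian}, whose quadratic-variation term contributes a further multiple of $\sum_t\langle\bta_t,\nabla f(\bx_t)^2\rangle$ (accounting for the constant $4$ rather than $2$). Finally the pure-noise term is controlled in high probability through the sub-Gaussian tail; summing its per-coordinate logarithmic telescoping over the $d$ coordinates yields the term $\tfrac{4d\alpha^2\sigma^2}{\beta}\ln\tfrac{2Te}{\delta}$.

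The second inequality I would establish as a separate, self-contained bound on the same left-hand side (it does not follow from the first line, whose retained signal term grows like a square root rather than a logarithm). Per coordinate, $\sum_t g_{t,j}^2/(\beta+\sum_{i<t}g_{i,j}^2)$ is an AdaGrad-type sum which, up to the delayed-denominator correction discussed below, telescopes to at most $\ln\tfrac{\beta+\sum_{i=1}^T g_{i,j}^2}{\beta}$. Summing over coordinates and applying Jensen's inequality to the concave logarithm converts $\sum_j\ln(\beta+\sum_i g_{i,j}^2)$ into $d\ln(\beta+\tfrac1d\sum_t\|\bg_t\|^2)$ — this is the origin of both the factor $d$ and the $\tfrac1d$ inside the logarithm. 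I then substitute $\|\bg_t\|^2\le 2\|\nabla f(\bx_t)\|^2+2\|\bps_t\|^2$ together with the high-probability bound $\sum_t\|\bps_t\|^2\le T\sigma^2\ln\tfrac{2Te}{\delta}$, obtained by a union bound over $t$ applied to (\textbf{B2}) through Markov's inequality, and finish with the elementary rearrangement $\ln(a+b)\le 2\ln(\sqrt a+\sqrt b)$ to reach the stated closed form.

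The main obstacle is the delayed denominator. Because the current $g_{t,j}^2$ is absent from its own denominator $\beta+\sum_{i<t}g_{i,j}^2$, the ratio may exceed $1$ and the clean AdaGrad telescoping breaks down: a single large gradient makes one summand huge while inflating the accumulator only logarithmically, and under (\textbf{A},\textbf{B1},\textbf{B2}) the gradients — hence the $g_{t,j}$ — may be unbounded, so this cannot be excluded by assumption. The resolution is to use the sub-Gaussian tail to cap $\|\bps_t\|^2$ uniformly in $t$, leaving the signal $\|\nabla f(\bx_t)\|^2$ as the only possible source of a large current term and charging it explicitly to the $\sum_t\|\nabla f(\bx_t)\|^2$ sitting inside the logarithm, while noting that any large squared gradient also enlarges every later denominator. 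Carrying out this coupling while the stepsize is itself a random variable correlated with the past — which is precisely why the delayedness assumption (\textbf{C2}) is needed, so that $\bta_t$ remains predictable and Lemma~\ref{lemma:sub_gaussian} is applicable — is where the delicate bookkeeping lies.
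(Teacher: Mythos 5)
Your reading of the statement is what derails the proposal. The two displayed inequalities are not two alternative bounds on $\sum_{t}\|\bta_t\bg_t\|^2$; as the paper's proof (and its use in Theorem~\ref{thm:momentum_adagrad}) makes clear, the intended claim is that the left-hand side is at most the \emph{sum} of the two right-hand sides, each controlling one piece of the decomposition
\[
\sum_{t=1}^T\|\bta_t\bg_t\|^2=\sum_{t=1}^T\|\bta_{t+1}\bg_t\|^2+\sum_{t=1}^T\langle\bta_t^2-\bta_{t+1}^2,\bg_t^2\rangle~.
\]
This stepsize-swap decomposition is the key idea your plan is missing. The first piece has the current $g_{t,j}^2$ back in its own denominator, so the standard AdaGrad integral bound (Lemma~\ref{lemma:sum_integral_bounds}) applies and, after Jensen over coordinates and the high-probability cap on the noise, yields the logarithmic second line. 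The second piece is the delayed-denominator correction, and it is handled deterministically: $(\eta_{t,i}^2-\eta_{t+1,i}^2)g_{t,i}^2\le 2\eta_{t,i}g_{t,i}^2(\eta_{t,i}-\eta_{t+1,i})$, and since $\sum_t(\eta_{t,i}-\eta_{t+1,i})\le\eta_{1,i}$ telescopes, only $\max_{t}\eta_{t,i}g_{t,i}^2$ survives; splitting off the noise and applying the union-bound tail estimate (Lemma~\ref{lemma:max_bound}) to that max gives exactly the first line. No martingale concentration (Lemma~\ref{lemma:sub_gaussian}) is used anywhere in this proof.

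Two steps in your plan fail concretely. First, your ``separate, self-contained'' proof of the second line as a bound on the full left-hand side cannot succeed: with the delayed denominator, a single round with a large true gradient contributes on the order of $\alpha^2\|\nabla f(\bx_t)\|^2/\beta$ to the left-hand side, while the proposed right-hand side grows only like $\ln\|\nabla f(\bx_t)\|$; no bookkeeping that ``charges'' the large term to the sum inside the logarithm can close a polynomial-versus-logarithmic gap, and under (\textbf{A},\textbf{B1},\textbf{B2}) such gradients cannot be excluded. Second, for the first line your pure-noise term $\sum_{t,j}\eta_{t,j}^2\epsilon_{t,j}^2$ can only be bounded by $\tfrac{\alpha^2}{\beta}\sum_t\|\bps_t\|^2\le\tfrac{\alpha^2}{\beta}T\sigma^2\ln\tfrac{Te}{\delta}$, which carries an extra factor of $T$ relative to the stated $\tfrac{4d\alpha^2\sigma^2}{\beta}\ln\tfrac{2Te}{\delta}$; the paper avoids this factor precisely because only the correction term must be bounded that way, and there the telescoping stepsize differences convert the sum over $t$ into a single maximum. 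Your diagnosis that the delayed denominator is the central obstacle is correct; the swap to $\bta_{t+1}$ plus the telescoping of $\bta_t^2-\bta_{t+1}^2$ is the mechanism that actually resolves it.
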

We now present the convergence guarantee for Delayed AdaGrad with momentum.
\begin{theorem}[Delayed AdaGrad with Momentum]
	\label{thm:momentum_adagrad}
	Assume (\textbf{A}, \textbf{B1}, \textbf{B2}). Let $\bta_t$ set as in \eqref{eq:bta}, where $\alpha, \beta > 0$ and $4 \alpha \leq  \frac{\sqrt{\beta}(1-\mu)^2}{2M(1+\mu)}$. Then, for any $\delta
	\in (0,1)$, with probability at least $1- \delta$, the iterates of Algorithm \ref{alg:momentum} satisfy
	\begin{align*}
	&\min_{1\leq t \leq T} \| \nabla f(\bx_t) \|^2 \\
	&\leq \frac{1}{T} \max \left( \frac{4 C(T)^2}{ \alpha^2}, \frac{C(T)}{\alpha} \sqrt{2\beta + 4T\sigma^2 \ln \frac{3Te}{\delta}} \right),
	\end{align*}
	where 
	$C(T) = O\left(\frac{1}{\alpha} + \frac{d \left(\alpha+ \sigma^2 \left(\alpha \ln \frac{T}{\delta} + \frac{\ln \frac{1}{\delta}}{1- \mu}\right)\right)}{1- \mu}\right)$.
	%O \left( \frac{1}{\alpha} +  \frac{d\sigma^2((1- \mu)\ln \frac{T}{\delta}  +  \ln \frac{1}{\delta} )  }{(1- \mu)^2}\right). 
	%\[
	%C(T) =
	%\begin{cases}
	%\left( \frac{d\ln \frac{T}{\delta}}{(2p-1)(1- \mu^T) }\right),  & \text{ for } p > \frac{1}{2}, \\
	%\left( \frac{d(\ln \frac{T}{\delta} \ln \frac{1}{\delta}+ \ln (d+T))}{1- %\mu^T}\right),   & \text{ for } p = \frac{1}{2}. 
	%\end{cases}
	%\]
\end{theorem}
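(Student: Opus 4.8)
The plan is to chain the two high-probability inequalities we already have --- Lemma~\ref{lemma:momentum} and Lemma~\ref{lemma:bound_1} --- into a single \emph{self-referential} bound on the weighted sum $S \defeq \sum_{t=1}^T \langle \bta_t, \nabla f(\bx_t)^2\rangle$, and then convert that bound into a statement about $\sum_{t=1}^T \|\nabla f(\bx_t)\|^2$ by exploiting the structure of the Delayed AdaGrad stepsizes. Since several estimates hold only with high probability, I would first fix the failure budget by a union bound: invoke Lemma~\ref{lemma:momentum}, Lemma~\ref{lemma:bound_1}, and a concentration bound on the cumulative noise $\sum_{t=1}^T \|\bg_t - \nabla f(\bx_t)\|^2$, each at level $\delta/3$, so that all three hold simultaneously with probability at least $1-\delta$. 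The noise concentration --- which follows from assumption (\textbf{B2}) by a Markov/union-bound argument over the $T$ subgaussian terms --- is what produces the $\ln\frac{3Te}{\delta}$ factor and the explicit $\sqrt{2\beta + 4T\sigma^2\ln\frac{3Te}{\delta}}$ term in the final bound.

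The core algebraic step is to substitute the first inequality of Lemma~\ref{lemma:bound_1}, which bounds $\sum_t\|\bta_t\bg_t\|^2$ by a constant plus $\frac{4\alpha}{\sqrt\beta}\,S$, into the right-hand side of Lemma~\ref{lemma:momentum}. This makes $S$ appear on both sides, with a self-coefficient proportional to $\alpha M/\sqrt\beta$. The step-size condition $4\alpha \le \frac{\sqrt\beta(1-\mu)^2}{2M(1+\mu)}$ is engineered precisely so that this coefficient is at most $\tfrac12$; hence I can move the $S$-term to the left and absorb it, obtaining $S \le C(T)$ for an explicit constant $C(T)$ collecting $f(\bx_1)-f^\star$, the momentum factors $\tfrac{1}{1-\mu}$, the dimension $d$, and the logarithmic/noise terms from Lemma~\ref{lemma:bound_1}. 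Tracking these contributions gives exactly the stated form of $C(T)$.

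Next I would lower-bound $S$ to make the quantity of interest $G \defeq \sum_{t=1}^T\|\nabla f(\bx_t)\|^2$ appear. By the monotonicity assumption (\textbf{C1}), each coordinate satisfies $\eta_{t,j} \ge \frac{\alpha}{\sqrt{\beta+\sum_{i=1}^T g_{i,j}^2}} \ge \frac{\alpha}{\sqrt{\beta+\sum_{i=1}^T\|\bg_i\|^2}}$, uniformly in $t\le T$, so that $S \ge \frac{\alpha\,G}{\sqrt{\beta+\sum_{i=1}^T\|\bg_i\|^2}}$. Using $\|\bg_i\|^2 \le 2\|\nabla f(\bx_i)\|^2 + 2\|\bg_i-\nabla f(\bx_i)\|^2$ together with the noise concentration from the union bound, I replace $\sum_i\|\bg_i\|^2$ by $2G + 4T\sigma^2\ln\frac{3Te}{\delta}$. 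Combining with $S\le C(T)$ yields an implicit inequality of the form $\frac{\alpha\,G}{\sqrt{\beta + 2G + 4T\sigma^2\ln\frac{3Te}{\delta}}} \le C(T)$, which is a quadratic inequality in $G$. Solving it and splitting the two roots into a maximum (the first branch dominating in the low-noise regime, the second in the high-noise regime, which is what gives the claimed adaptivity to the noise level) produces the bound on $G$; dividing by $T$ and using $\min_{1\le t\le T}\|\nabla f(\bx_t)\|^2 \le \frac1T G$ finishes the proof.

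The step I expect to be the main obstacle is closing the circular dependence created by the adaptive, \emph{random} stepsizes: $S$, $\sum_t\|\bta_t\bg_t\|^2$, and $G$ all reference one another through the AdaGrad denominators, so none can be bounded in isolation. The delicate points are (i) making the self-referential coefficient strictly less than one, which is only possible under the precise step-size restriction and is the reason that bound cannot be loosened, and (ii) converting the data-dependent random denominator $\sqrt{\beta+\sum_i\|\bg_i\|^2}$ into a deterministic, explicit quantity --- this is exactly where assumption (\textbf{B2}) and the high-probability noise concentration are indispensable, since in an expectation-based analysis one could simply take $\E$, whereas here the denominator must be controlled pathwise.
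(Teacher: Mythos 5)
Your overall route is the paper's route almost step for step: union bound over three events at level $\delta/3$, substitution of Lemma~\ref{lemma:bound_1} into Lemma~\ref{lemma:momentum}, absorption of the self-referential term $\frac{4\alpha}{\sqrt{\beta}}\sum_t\langle\bta_t,\nabla f(\bx_t)^2\rangle$ using the step-size restriction, the lower bound $\sum_t\langle\bta_t,\nabla f(\bx_t)^2\rangle \geq \alpha\sum_t\|\nabla f(\bx_t)\|^2/\sqrt{\beta+2\sum_t\|\nabla f(\bx_t)\|^2+2T\max_t\|\bg_t-\nabla f(\bx_t)\|^2}$ via $\bta_t\geq\bta_T$, and the final resolution of the implicit inequality (the paper's Lemma~\ref{lemma:solve_x} is exactly your ``two roots into a maximum'' step).

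There is, however, one genuine gap. You describe Lemma~\ref{lemma:bound_1} as bounding $\sum_t\|\bta_t\bg_t\|^2$ by ``a constant plus $\frac{4\alpha}{\sqrt{\beta}}S$,'' and you then treat the resulting $C(T)$ as an explicit constant so that the final inequality is ``a quadratic inequality in $G$.'' But the bound in Lemma~\ref{lemma:bound_1} is the \emph{sum} of two pieces (the two displayed lines bound $\sum_t\langle\bta_t^2-\bta_{t+1}^2,\bg_t^2\rangle$ and $\sum_t\|\bta_{t+1}\bg_t\|^2$ separately), and the second piece contributes the term $K=2\alpha^2 d\ln\bigl(\sqrt{\beta+2T\sigma^2\ln\tfrac{2Te}{\delta}/d}+\sqrt{(2/d)\sum_t\|\nabla f(\bx_t)\|^2}\bigr)$. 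Hence $C(T)$ is not a constant: it depends, through a logarithm, on the very quantity $G=\sum_t\|\nabla f(\bx_t)\|^2$ you are trying to bound, and the implicit inequality you reach has the form $x^2\leq(A+Bx)(C+D\ln(A+Bx))$ with $x=\sqrt{G}$, which is not quadratic and cannot be closed by taking roots. The paper handles this with an extra bootstrapping step, Lemma~\ref{lemma:logsolvex}: first solve the log-implicit inequality to get a crude bound $x<32B^3D^2+2BC+8B^2D\sqrt{C}+A/B$, substitute that bound back into the logarithm to turn $C(T)$ into a genuine (poly-logarithmic in $T$) constant, and only then apply Lemma~\ref{lemma:solve_x}. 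Without this step you do not obtain the stated form of $C(T)$, and dropping $K$ altogether is not an option since $\sum_t\|\bta_{t+1}\bg_t\|^2$ is a necessary part of $\sum_t\|\bta_t\bg_t\|^2$. Everything else in your plan, including the role of the noise concentration in making the AdaGrad denominator deterministic, matches the paper.
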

\paragraph{Adaptivity to Noise.} Observe that when $\sigma = 0$, the convergence rate recovers the rate of Gradient Descent if $O(\frac{1}{T})$ with a constant learning rate. On the other hand, in the noisy case, it matches the rate of SGD $O(\frac{\sigma}{\sqrt{T}})$ with the optimal worst-case learning rate of $O(\frac{1}{\sigma \sqrt{t}})$. In other words, with a unique learning rate, we recover two different optimal convergence rates that requires two different learning rates and the knowledge of $\sigma$.
This adaptivity of Delayed AdaGrad was already proved in \citet{LiO19}, but only in expectation and without a momentum term.

\paragraph{Dependency on $\mu$.} Observe that the convergence upper bound increases over $\mu \in (0, 1)$ and the optimal upper bound is achieved when taking the momentum parameter $\mu = 0$. In words, the algorithms without momentums have the best theoretical results. This is a known caveat for this kind of analysis and a similar behavior w.r.t. $\mu$ is present, e.g., in \citet[Theorem~1]{ZouSJSL18} for algorithms with Polyak's momentum.

\section{Conclusion and Future Work.}
In this work, we present a high probability analysis of adaptive SGD with Polyak's momentum in the nonconvex setting. Without using the common assumption of bounded gradients nor bounded noise, we give the high probability bound for SGD with Polyak's momentum with step size $O(\frac{1}{\sqrt{t}})$ and for Delayed AdaGrad with momentum. In particular, to the best of our knowledge, this is the \textit{first} high probability convergence guarantee for adaptive methods. 

In the future, we plan to extend our results to more adaptive methods, such as Adam and AMSGrad~\citep{TielemanH12}. Moreover, we will explore other forms of momentum such as exponential moving average and Nesterov's momentum~\citep{Nesterov83}.

\section*{Acknowledgements}
This material is based upon work supported by the National Science Foundation under grant no. 1925930 ``Collaborative Research: TRIPODS Institute for Optimization and Learning''.
	
\balance
\bibliography{learning}
\bibliographystyle{icml2020}

\newpage
\onecolumn
\appendix
\section{Appendix}
\subsection{Details of Section~\ref{sec:lemma}}
\begin{proof}[Proof of Lemma~\ref{lemma:sub_gaussian}]
	Set $\tilde{Z_t} = Z_t / \sigma_t$. By the assumptions of $Z_t$ and $\sigma_t$, we have
	\[
	\E_t [\tilde{Z_t}] = \frac{1}{\sigma_t} \E_t [Z_t] = 0 
	\quad
	\text{ and }
	\quad 
	\E_t \left[ \exp \left( \tilde{Z}_t^2 \right) \right] \leq \exp (1)~.
	\]
	By Jensen's inequality, it follows that for any $c \in [0,1]$, 
	\begin{equation}
	\label{eq: sub_gaussian_coef}
	\E_t \left[ \exp \left(c \tilde{Z}_t^2 \right) \right]
	= \E_t \left[ \left( \exp \left(\tilde{Z}_t^2\right) \right)^c \right]
	\leq \left( \E_t \left[ \exp \left(\tilde{Z}_t^2\right)\right] \right)^c \leq \exp(c)~. 
	\end{equation}
	Also it can be verified that $\exp (x) \leq x + \exp (9x^2/16)$ for all $x$, hence for $ |\kappa| \in [0, 4/3]$ we get 
	\begin{equation}
	\label{eq: sub_gaussian1}
	\E_t \left[ \exp \left( \kappa \tilde{Z}_t \right) \right] 
	\leq \E_t \left[ \exp \left(9 \kappa^2 \tilde{Z}_t^2/16 \right) \right] 
	\leq \exp \left( 9\kappa^2/16\right)  
	\leq \exp \left( 3 \kappa^2/4\right),
	\end{equation}
	where in the second inequality, we used~\eqref{eq: sub_gaussian_coef}.
	Besides, $k x \leq 3k^2/8 + 2x^2/3 $ holds for any $k$ and $x$. Hence for $ |\kappa| \geq 4/3$, we get
	\begin{equation}
	\label{eq: sub_gaussian2}
	\E_t  \left[ \exp \left( \kappa \tilde{Z}_t \right) \right] 
	\leq \exp \left( 3\kappa^2/8 \right) \E_t \left[\exp \left(2\tilde{Z}_t^2/3\right) \right]
	\leq \exp \left( 3\kappa^2/8 + 2/3 \right) \leq \exp \left(3\kappa^2/4  \right),
	\end{equation}
	where in the second inequality we used~\eqref{eq: sub_gaussian_coef}. 
	Combining \eqref{eq: sub_gaussian1} and \eqref{eq: sub_gaussian2}, we get $\forall \kappa$, 
	\begin{equation}
	\label{eq:sub_gaussian3}
	\E_t \left[ \exp \left( \kappa \tilde{Z}_t \right) \right] \leq \exp \left( 3\kappa^2/4  \right)~. 
	\end{equation}
	Note that the above analysis for~\eqref{eq:sub_gaussian3} still hold when $\kappa$ is a random variable with respect to $\xi_1,\xi_2,\dots, \xi_{t-1}$.
	So for $Z_t$, we have $\E_t \left[ \exp \left(\lambda Z_t \right) \right] \leq \exp \left( 3\lambda^2 \sigma_t^2/4 \right), \quad \lambda>0$.
	
	Define the random variables $Y_0 = 1$ and $Y_t = Y_{t-1} \exp \left(\lambda Z_t - 3\lambda^2 \sigma_t^2/4  \right), \quad 1\leq t\leq T$.
	So, we have $E_t Y_t = Y_{t-1} \exp \left( -3\lambda^2 \sigma_t^2/4  \right) \cdot \E_t \left[ \exp \left( \lambda Z_t \right) \right] \leq Y_{t-1}$.
	Now, taking full expectation over all variables $\xi_1, \xi_2, \dots ,\xi_T$, we have 
	\[
	\E Y_T \leq \E Y_{T-1} \leq \dots \leq \E Y_0 = 1~. 
	\]
	By Markov's inequality, $P \left( Y_T \geq \frac{1}{\delta} \right) \leq \delta$, and 
	$
	Y_T = \exp \left( \lambda \sum_{t=1}^T Z_t- \frac{3}{4} \lambda^2 \sum_{t=1}^T \sigma_t^2 \right), 
	$
	we have 
	\begin{align*}
	P \left( Y_T \geq \frac{1}{\delta} \right) 
	= P \left( \lambda \sum_{t=1}^T Z_t- \frac{3}{4} \lambda^2 \sum_{t=1}^T \sigma_t^2  \geq \ln \frac{1}{\delta} \right) 
	= P \left( \sum_{t=1}^T Z_t \geq  \frac{3}{4} \lambda \sum_{t=1}^T \sigma_t^2 + \frac{1}{\lambda} \ln \frac{1}{\delta} \right),
	\leq \delta,
	\end{align*}
	which completes the proof. 
\end{proof}

To prove Lemma~\ref{lemma:momentum}, we first need the following technical Lemma.
\begin{lemma}
	\label{lemma:doublesum}
	$\forall T \geq 1$, it holds 
	\[
	\sum_{t=1}^{T} a_t \sum_{i=1}^{t} b_i  = \sum_{t=1}^{T} b_t \sum_{i=t}^{T} a_i
	\quad
	\text{and}
	\quad
	\sum_{t=1}^{T} a_t \sum_{i=0}^{t-1} b_i  = \sum_{t=0}^{T-1} b_t \sum_{i=t+1}^{T}a_i~.
	\]
\end{lemma}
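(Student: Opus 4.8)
The plan is to prove both identities by interchanging the order of summation over a triangular set of index pairs; no analytic input is needed, since these are purely combinatorial rearrangements of finite sums.

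For the first identity, I would write the left-hand side as a double sum
\[
\sum_{t=1}^{T} a_t \sum_{i=1}^{t} b_i = \sum_{t=1}^{T} \sum_{i=1}^{t} a_t b_i = \sum_{(t,i)\,:\,1 \leq i \leq t \leq T} a_t b_i,
\]
recognizing that the summation ranges over the triangular region $\{(t,i) : 1 \leq i \leq t \leq T\}$. I would then regroup by fixing the smaller index first: for each fixed $i$ the index $t$ runs from $i$ to $T$, giving $\sum_{i=1}^{T} b_i \sum_{t=i}^{T} a_t$. Relabeling the outer dummy $i \to t$ and the inner dummy $t \to i$ yields exactly the right-hand side $\sum_{t=1}^{T} b_t \sum_{i=t}^{T} a_i$.

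The second identity is handled the same way, the only care being the shifted ranges. Its left-hand side equals $\sum_{(t,i)} a_t b_i$ where now the pairs satisfy $1 \leq t \leq T$ and $0 \leq i \leq t-1$, i.e.\ $0 \leq i < t \leq T$. Summing over $i$ first, for each fixed $i \in \{0,\dots,T-1\}$ the index $t$ ranges over $\{i+1,\dots,T\}$, so the sum becomes $\sum_{i=0}^{T-1} b_i \sum_{t=i+1}^{T} a_t$, which after relabeling is precisely $\sum_{t=0}^{T-1} b_t \sum_{i=t+1}^{T} a_i$.

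The only thing demanding attention—the closest thing to an obstacle here—is tracking the boundary of the index region when swapping, particularly in the second identity where the lower index starts at $0$ and the inner sum uses a strict inequality, so an off-by-one slip is the main risk. A clean alternative that makes the bookkeeping transparent is to prove both identities by induction on $T$: verify the base case $T=1$, and check that the extra terms picked up in passing from $T$ to $T+1$ match on the two sides. The direct interchange of summation above is shorter and more illuminating, so I would present that, with the induction available as a sanity check.
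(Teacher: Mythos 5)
Your argument is correct, but it takes a different route from the paper: you interchange the order of summation directly over the triangular index set, whereas the paper proves both identities by induction on $T$, verifying the base case $T=1$ and matching the extra terms picked up at step $k+1$ (you anticipated exactly this alternative in your closing remark). Your index bookkeeping checks out in both cases — in particular the second identity's region $0 \leq i < t \leq T$ correctly yields $t$ ranging over $\{i+1,\dots,T\}$ for each fixed $i \in \{0,\dots,T-1\}$. The direct interchange is shorter and makes transparent \emph{why} the identity holds (it is just two descriptions of the same set of pairs $(t,i)$), while the paper's induction trades that insight for a purely mechanical verification in which the boundary cases cannot silently go wrong. Either proof is acceptable; yours is arguably the more standard presentation of what is essentially Abel-summation-style reindexing.
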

%
%\begin{align*}
%-\langle \nabla f(\bx_t),\bm_t \rangle 
%& = - \langle \nabla f(\bx_t), \mu \bm_{t-1}\rangle -(1-\mu)\langle \nabla f(\bx_t), \bg_t \rangle \\
%& = - \langle \nabla f(\bx_{t-1}), \mu \bm_{t-1}\rangle - \langle \nabla f(\bx_t) - \nabla f(\bx_{t-1}), \mu \bm_{t-1}\rangle -(1-\mu)\langle \nabla f(\bx_t), \bg_t \rangle \\
%& \leq - \langle \nabla f(\bx_{t-1}), \mu \bm_{t-1}\rangle + \mu \| \nabla f(\bx_t) - \nabla f(\bx_{t-1}) \| \cdot\| \bm_{t-1}\| -(1-\mu)\langle \nabla f(\bx_t), \bg_t \rangle \\
%& \leq  - \langle \nabla f(\bx_{t-1}), \mu \bm_{t-1}\rangle + \mu M\eta_{t-1}\|\bm_{t-1}\|^2  -(1-\mu)\langle \nabla f(\bx_t), \bg_t \rangle,
%\end{align*}
%where the second inequality is due to the smoothness.
%Hence, iterating the inequality we have
%\begin{align*}
%-\langle \nabla f(\bx_t),\bm_t \rangle 
%& \leq - \mu^2 \langle \nabla f(\bx_{t-2}), \bm_{t-2} \rangle + \mu^2M \eta_{t-2} \|\bm_{t-2}\|^2 + \mu M \eta_{t-1}\|\bm_{t-1}\|^2  \\
%& \quad - (1-\mu)\mu \langle \nabla f(\bx_{t-1}), \bg_{t-1} \rangle -(1-\mu)\langle \nabla f(\bx_t), \bg_t \rangle \\
%& \leq \dots \leq\\
%& \leq M\sum_{i=1}^{t-1} \mu^{t-i} \eta_{i} \|\bm_{i}\|^2 - (1-\mu) \sum_{i=1}^t \mu^{t-i}\langle \nabla f(\bx_i), \bg_i\rangle\\
%& = M\sum_{i=1}^{t-1} \mu^{t-i} \eta_{i} \|\bm_{i}\|^2 
%- (1-\mu) \sum_{i=1}^t \mu^{t-i} \left(\|  \nabla f(\bx_i) \|^2 + \langle \nabla f(\bx_i), \bps_i\rangle\right)\\
%& \leq M\sum_{i=1}^{t-1} \mu^{t-i} \eta_{i} \|\bm_{i}\|^2
%- (1-\mu)  \|  \nabla f(\bx_t) \|^2 
%- (1-\mu) \sum_{i=1}^t \mu^{t-i}\langle \nabla f(\bx_i), \bps_i\rangle~. \qedhere
%\end{align*}
\begin{proof}
	We prove these equalities by induction. 
	When $T=1$, they obviously hold.
	Now, for $k < T$, assume that $\sum_{t=1}^{k} a_t \sum_{i=1}^{t} b_i  = \sum_{t=1}^{k} b_t \sum_{i=t}^{k} a_i$. 
	Then, we have
	\begin{align*}
	\sum_{t=1}^{k+1} a_t \sum_{i=1}^{t} b_i  
	& = \sum_{t=1}^{k} a_t \sum_{i=1}^{t} b_i  + a_{k+1} \sum_{i=1}^{k+1} b_i 
	=  \sum_{t=1}^{k} b_t \sum_{i=t}^{k} a_i + a_{k+1} \sum_{i=1}^{k} b_i + a_{k+1}b_{k+1}\\
	& = \sum_{t=1}^{k} b_t \sum_{i=t}^{k+1} a_i + a_{k+1}b_{k+1}
	=  \sum_{t=1}^{k+1} b_t \sum_{i=t}^{k+1} a_i~.
	\end{align*}
	Hence, by induction, the equality is proved.
	
	Similarly, for second equality assume that for $k < T$ we have
	$\sum_{t=1}^{k} a_t \sum_{i=0}^{t-1} b_i  = \sum_{t=0}^{k-1} b_t \sum_{i=t+1}^{k}a_i$.
	Then, we have
	\begin{align*}
	\sum_{t=1}^{k+1} a_t \sum_{i=0}^{t-1} b_i  
	& = \sum_{t=1}^{k} a_t \sum_{i=0}^{t-1} b_i  + a_{k+1} \sum_{i=0}^{k} b_i 
	=  \sum_{t=0}^{k-1} b_t \sum_{i=t+1}^{k}a_i+ a_{k+1} \sum_{i=0}^{k-1} b_i + a_{k+1}b_{k}\\
	& = \sum_{t=0}^{k-1} b_t \sum_{i=t+1}^{k+1} a_i + a_{k+1}b_{k}
	=  \sum_{t=0}^{k} b_t \sum_{i=t+1}^{k+1} a_i~. 
	\end{align*}
	By induction, we finish the proof.  
\end{proof}

\begin{proof}[Proof of Lemma~\ref{lemma:momentum}]
	By the smoothness of $f$ and the definition of $\bx_{t+1}$, we have 
	\begin{equation}
	\label{eq:adamsmooth-coodinate}
	\begin{aligned}
	f(\bx_{t+1})-f(\bx_t) 
	%\leq \langle \nabla f (\bx_t) , \bx_{t+1}-\bx_t \rangle + \frac{M}{2} \| \bx_{t+1}- \bx_t \|^2 
	\leq -\langle \nabla f(\bx_t), \bm_t \rangle + \frac{M}{2} \| \bm_t \|^2~. 
	\end{aligned}
	\end{equation}
	We now upper bound $-\langle \nabla f(\bx_t), \bm_t \rangle$. 
	\begin{align*}
	& -\langle \nabla f(\bx_t), \bm_t \rangle \\
	& = -\mu \langle \nabla f(\bx_t), \bm_{t-1}\rangle - \langle \nabla f(\bx_t), \bta_t \bg_t \rangle \\
	& = - \mu \langle \nabla f(\bx_{t-1}), \bm_{t-1}\rangle 
	- \mu \langle \nabla f(\bx_t) - \nabla f(\bx_{t-1}), \bm_{t-1}\rangle - \langle \nabla f(\bx_t), \bta_t \bg_t \rangle \\
	& \leq - \mu \langle \nabla f(\bx_{t-1}), \bm_{t-1}\rangle + \mu \| \nabla f(\bx_t) - \nabla f(\bx_{t-1}) \| \| \bm_{t-1}\| - \langle \nabla f(\bx_t), \bta_t\bg_t \rangle\\
	& \leq - \mu \langle \nabla f(\bx_{t-1}),\bm_{t-1}\rangle + \mu M \|\bm_{t-1}\|^2 -\langle \nabla f(\bx_t), \bta_t \bg_t \rangle, 
	\end{align*}
	where the second inequality is due to the smoothness of $f$.
	Hence, iterating the inequality we have
	\begin{align*}
	-\langle \nabla f(\bx_t),\bta_t \bm_t \rangle 
	& \leq - \mu^2 \langle \nabla f(\bx_{t-2}), \bm_{t-2} \rangle + \mu^2 M \|\bm_{t-2}\|^2 + \mu M \|\bm_{t-1}\|^2\\
	& \quad - \mu \langle \nabla f(\bx_{t-1}), \bta_{t-1} \bg_{t-1} \rangle - \langle \nabla f(\bx_t), \bta_t \bg_t \rangle \\
	& \leq M\sum_{i=1}^{t-1} \mu^{t-i} \|\bm_{i}\|^2 - \sum_{i=1}^t \mu^{t-i} \langle \nabla f(\bx_i), \bta_i \bg_i\rangle~. \qedhere
	\end{align*}
	Thus, denoting by $\bps_t = \bg_t - \nabla f(\bx_t)$ and summing \eqref{eq:adamsmooth-coodinate} over $t$ from $1$ to $T$, we obtain 
	\begin{align*}
	f^\star-f(\bx_1) 
	& \leq f(\bx_{T+1})-f(\bx_1) \\
	& \leq  M \sum_{t=1}^{T}\sum_{i=1}^{t-1} \mu^{t-i} \|\bm_{i} \|^2
	%- (1-\mu)\sum_{t=1}^{T} \langle \bta_t, \nabla f(\bx_t) ^2 \rangle \\
	- \sum_{t=1}^{T} \sum_{i=1}^t \mu^{t-i}\langle \nabla f(\bx_i),  \bta_i \bg_i\rangle +  \sum_{t=1}^{T} \frac{M}{2} \| \bm_t \|^2\\	
	& \leq  M \sum_{t=1}^{T}\sum_{i=1}^{t-1} \mu^{t-i} \|\bm_{i} \|^2
	- \sum_{t=1}^{T} \langle \bta_t, \nabla f(\bx_t) ^2 \rangle 
	- \sum_{t=1}^{T} \sum_{i=1}^t \mu^{t-i}\langle \nabla f(\bx_i),  \bta_i \bps_i\rangle \\
	& \quad + \sum_{t=1}^{T} \frac{M}{2} \| \bm_t \|^2~.
	\end{align*}
	By Lemma \ref{lemma:doublesum}, we have
	\begin{align*}
	M \sum_{t=1}^{T}\sum_{i=1}^{t-1}  \mu^{t-i}\| \bm_{i}\|^2
	\leq \frac{M}{1-\mu} \sum_{t=1}^{T} \| \bm_{t}\|^2~.
	\end{align*}
	Also, by Lemma~\ref{lemma:doublesum}, we have  
	\begin{align*}
	- \sum_{t=1}^{T} \sum_{i=1}^t \mu^{t-i}\langle \nabla f(\bx_i), \bta_i \bps_i\rangle
	&= - \sum_{t=1}^{T} \mu^{-t} \langle \nabla f(\bx_t), \bta_t \bps_t\rangle \sum_{i=t}^T \mu^{i} \\
	& = - \frac{1}{1-\mu}\sum_{t=1}^{T} \langle \nabla f(\bx_t), \bta_t  \bps_t\rangle (1- \mu^{T-t+1})
	\triangleq S_T~.
	\end{align*}
	We then upper bound $S_T$. 
	Denote by $L_t:= - \frac{1- \mu^{T-t+1}}{1- \mu}\langle \nabla f(\bx_t), \bta_t \bps_t \rangle$, and $N_t := \frac{(1- \mu^{T-t+1})^2}{(1- \mu)^2}\| \bta_t\nabla f(\bx_t)\|^2 \sigma^2$. Using the assumptions on the noise, for any $1 \leq t \leq T$, we have
	\[
	\exp\left(\frac{L_t^2 }{ N_t }\right) 
	\leq \exp \left(\frac{\| \bta_t \nabla f(\bx_t) \|^2 \| \bps_t \|^2 (1- \mu^{T-t+1})^2/(1- \mu)^2 }{N_t}\right)
	= \exp \left(\frac{\| \bps_t\|^2 }{ \sigma^2 }\right) 
	\leq \exp(1)~. 
	\]
	We can also see that for any $t$, $\E_t [ L_t]  = - \frac{1- \mu^{T-t+1}}{1- \mu}\sum_{i=1}^{d} \bta_{t,i} \nabla f(\bx_t)_i \E_t [\bps_{t,i}] = 0$.
	Thus, from Lemma~\ref{lemma:sub_gaussian}, with probability at least $1-\delta$, any $\lambda >0$, we have 
	\begin{align*}
	S_T = \sum_{t=1}^T L_t \leq \frac{3}{4}\lambda \sum_{t=1}^T N_t + \frac{1}{\lambda} \ln \frac{1}{\delta}
	& \leq\frac{3\lambda(1- \mu^T)^2}{4(1- \mu)^2}\sum_{t=1}^T \| \bta_t\nabla f(\bx_t)\|^2 \sigma^2  + \frac{1}{\lambda} \ln \frac{1}{\delta} \\
	&\leq \frac{3\lambda \| \bta_1\|(1- \mu^T)^2}{4(1- \mu)^2}\sum_{t=1}^T \langle \bta_t, \nabla f(\bx_t)^2 \rangle \sigma^2  + \frac{1}{\lambda} \ln \frac{1}{\delta} ~.\qedhere
	\end{align*}
	Finally, we upper bound $\sum_{t=1}^{T} \| \bm_t \|^2$. From the convexity of $\| \cdot \|^2$, we have
	\begin{align*}
	\| \bm_t \|^2 = \left\| \mu \bm_{t-1} + (1- \mu) \frac{\bta_t \bg_t}{1-\mu} \right\|^2 
	\leq \mu \| \bm_{t-1}\|^2 + \frac{1}{1- \mu} \| \bta_t \bg_t \|^2~. 
	\end{align*}
	Summing over $t$ from $1$ to $T$, we have 
	\begin{align*}
	\sum_{t=1}^T \| \bm_t \|^2
	& \leq \sum_{t=1}^T \mu \|  \bm_{t-1}\|^2 + \frac{1}{1-\mu}\sum_{t=1}^T \| \bta_t \bg_t \|^2 \\
	& =  \sum_{t=1}^{T-1} \mu \| \bm_t\|^2 + \frac{1}{1-\mu}\sum_{t=1}^T \| \bta_t \bg_t \|^2 \\
	& \leq  \sum_{t=1}^T \mu \| \bm_t\|^2 + \frac{1}{1-\mu}\sum_{t=1}^T \| \bta_t \bg_t \|^2 ,
	\end{align*}
	where in the first equality we used $\bm_0=0$. Reordering the terms, we have that
	\[
	\sum_{t=1}^T \| \bm_t \|^2 \leq \frac{1}{(1-\mu)^2}\sum_{t=1}^T \| \bta_t \bg_t \|^2~.
	\]
	Combining things together, and taking $\lambda = \frac{2(1- \mu)^2}{3\| \bta_1 \| (1- \mu^T)^2 \sigma^2} $, with probability at least $1- \delta$, we have 
	\begin{align*}
	f^\star-f(\bx_1) 
	& \leq \frac{1}{\lambda} \ln \frac{1}{\delta}
	+ \sum_{t=1}^{T}\left[\left( \frac{M}{2} + \frac{M }{1- \mu }\right) \| \bta_t \bg_t \|^2
	-  \left(1 - \frac{3\lambda \| \bta_1 \| (1- \mu^T)^2 \sigma^2 }{4(1-\mu)^2} \right) \langle \bta_t ,\nabla f(\bx_t)^2 \rangle\right]\\
	&= \frac{3 \|\bta_1\|(1-\mu^T)^2 \sigma^2}{2(1-\mu)^2} \ln \frac{1}{\delta} 
	+ \sum_{t=1}^{T}\left[\frac{(3-\mu)M}{2(1- \mu)}\| \bta_t \bg_t \|^2
	-  \frac{1}{2}\langle \bta_t ,\nabla f(\bx_t)^2 \rangle\right]~.
	\end{align*}
	Rearranging the terms, we get the stated bound. 
\end{proof}

\subsection{Details of Section~\ref{ssec:sqrt}}
The proof of this Theorem~\ref{thm:momentum_sqrt} makes use of the following additional Lemma on the tail of sub-gaussian noise.
\begin{lemma}
	\label{lemma:max_bound}
	Assume \textbf{B2}, then for any $\delta \in (0,1)$, with probability at least $1-\delta$, we have 
	\[
	\max_{1 \leq t \leq T} \| \bg_t - \nabla f(\bx_t) \|^2 \leq \sigma^2 \ln \frac{Te}{\delta}~. 
	\]
	% Note that it also means that, with probability at least $1-\delta$, $\forall 1 \leq t \leq T$ we have
	% \[
	% \| \bg_t - \nabla f(\bx_t) \|^2 \leq \sigma^2 \ln \frac{Te}{\delta}~. 
	% \]
\end{lemma}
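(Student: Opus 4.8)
The plan is to prove this as a routine maximal inequality for sub-Gaussian random variables: bound each noise term marginally via its exponential moment, then union bound over $t$. First I would set $\bps_t = \bg_t - \nabla f(\bx_t)$ and observe that assumption \textbf{B2}, being stated for \emph{every} $\bx \in \R^d$, applies in particular at the (random) iterate $\bx_t$. Conditioning on the past randomness $\xi_1, \dots, \xi_{t-1}$ fixes $\bx_t$, so \textbf{B2} gives $\E_t\left[\exp(\|\bps_t\|^2/\sigma^2)\right] \leq \exp(1)$, and taking full expectation via the tower property yields $\E\left[\exp(\|\bps_t\|^2/\sigma^2)\right] \leq e$ for each fixed $t$.

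Next I would convert this exponential-moment control into a tail bound. For any threshold $a > 0$, Markov's inequality applied to the nonnegative random variable $\exp(\|\bps_t\|^2/\sigma^2)$ gives
\[
\Pr\left(\|\bps_t\|^2 > a\right) = \Pr\left(\exp\left(\tfrac{\|\bps_t\|^2}{\sigma^2}\right) > \exp\left(\tfrac{a}{\sigma^2}\right)\right) \leq e\,\exp\left(-\tfrac{a}{\sigma^2}\right).
\]
The key computation is then to choose the threshold so the per-index failure probability is $\delta/T$: setting $a = \sigma^2 \ln \tfrac{Te}{\delta}$ makes the right-hand side equal to $e \cdot \tfrac{\delta}{Te} = \tfrac{\delta}{T}$ exactly.

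Finally I would take a union bound over the $T$ indices. The event $\{\max_{1\leq t \leq T}\|\bps_t\|^2 > \sigma^2 \ln \tfrac{Te}{\delta}\}$ is contained in the union of the $T$ single-index events, each of probability at most $\delta/T$, so it has probability at most $\delta$; complementing gives the stated bound with probability at least $1-\delta$.

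I do not expect a genuine obstacle here, since this is a standard sub-Gaussian maximal inequality. The only point requiring a moment of care is that the noise is evaluated at the data-dependent iterate $\bx_t$, which I handle by conditioning so that \textbf{B2} can be invoked pointwise. In particular, no martingale machinery is needed: unlike Lemma~\ref{lemma:sub_gaussian}, which controls a \emph{sum} of dependent terms, here we only need marginal control of each $\|\bps_t\|^2$ together with the crude union bound, so the argument is short and self-contained.
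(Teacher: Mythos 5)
Your proposal is correct and follows essentially the same route as the paper: both arguments reduce to the tail bound $\Pr(\|\bg_t-\nabla f(\bx_t)\|^2 > A) \le e\,\exp(-A/\sigma^2)$ via Markov's inequality on the exponential moment, combined with a factor-of-$T$ union step (the paper bounds $\E[\max_t \exp(\cdot)]$ by the sum of expectations rather than union-bounding the events, but this is the same calculation in a different order and yields the identical threshold $\sigma^2\ln\frac{Te}{\delta}$). Your explicit tower-property justification for applying \textbf{B2} at the random iterate $\bx_t$ is a point the paper leaves implicit, and it is handled correctly.
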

\begin{proof}
	By Markov's inequality, for any $A>0$, 
	\begin{align*}
	\Pr \left(\max_{1\leq t \leq T} \|\bg_t- \nabla f(\bx_t)\|^2 > A\right)
	& = \Pr \left(\exp \left( \frac{\max_{1\leq t \leq T} \|\bg_t- \nabla f(\bx_t)\|^2 }{\sigma^2} \right) > \exp \left( \frac{A}{\sigma^2}\right) \right)\\
	& \leq \exp \left(-\frac{A}{\sigma^2} \right)\E \left[\exp \left(\frac{\max_{1\leq t \leq T} \|\bg_t- \nabla f(\bx_t)\|^2 }{\sigma^2}\right) \right]\\
	& =  \exp \left(-\frac{A}{\sigma^2} \right) \E \left[ \max_{1\leq t \leq T} \exp \left(\frac{\|\bg_t - \nabla f(\bx_t)\|^2 }{\sigma^2}\right) \right]\\
	& \leq \exp \left(-\frac{A}{\sigma^2} \right) \sum_{t=1}^T \E \left[ \exp \left( \frac{\| \nabla f(\bx_t)- \bg_t \|^2 }{\sigma^2} \right) \right] 
	\leq \exp \left(-\frac{A}{\sigma^2}+1 \right) T~. \qedhere
	\end{align*}
\end{proof}

\begin{proof}[Proof of Theorem~\ref{thm:momentum_sqrt}]
	With the fact that $\| \ba+\bb\|^2 \leq 2\| \ba\|^2 + 2\| \bb\|^2$, we have 
	\begin{align*}
	\sum_{t=1}^{T} \| \bta_t \bg_t \|^2 = 
	\sum_{t=1}^{T} \eta_t^2 \| \bg_t \|^2
	& \leq \sum_{t=1}^{T} 2\eta_t^2 \| \nabla f(\bx_t )\|^2+  \sum_{t=1}^{T} 2 \eta_t^2 \| \bg_t - \nabla f(\bx_t) \|^2\\
	& \leq \sum_{t=1}^{T} 2\eta_t^2 \| \nabla f(\bx_t )\|^2+ \max_{1\leq t \leq T} \| \bg_t - \nabla f(\bx_t) \|^2\sum_{t=1}^{T} 2 \eta_t^2~.
	\end{align*}
	By Lemma~\ref{lemma:max_bound}, Lemma~\ref{lemma:momentum} and the union bound, we have that with probability at least $1-\delta$, 
	\begin{align*}
	\frac{\eta_T}{2} \sum_{t=1}^{T} \| \nabla f(\bx_t) \|^2 
	& \leq \left(1- \frac{2M(3- \mu)}{1- \mu}\eta_1\right) \sum_{t=1}^{T} \eta_t \| \nabla f(\bx_t) \|^2 \\
	& \leq 2 (f(\bx_1) - f^{\star}) +  \frac{2(3-\mu)c^2M\sigma^2 \ln \frac{2Te}{\delta} \ln T }{1- \mu} + \frac{3c(1- \mu^T)^2 \sigma^2}{(1- \mu)^2} \ln \frac{1}{\delta}~. 
	\end{align*}
	Rearranging the terms and lower bounding  $\sum_{t=1}^{T} \| \nabla f(\bx_t) \|^2$ by $T \cdot \min_{1\leq t \leq T} \| \nabla f(\bx_t) \|^2$, we have the stated bound. 
\end{proof}

\subsection{Details of Section~\ref{sec:adagrad}}

For the proof of Lemma~\ref{lemma:bound_1}, we first need the following technical Lemma.
\begin{lemma}
	\label{lemma:sum_integral_bounds}
	Let $a_i\geq0, \cdots, T$ and $f:[0,+\infty)\rightarrow [0, +\infty)$ non-increasing function.
	Then
	\begin{align*}
	\sum_{t=1}^T a_t f\left(a_0+\sum_{i=1}^{t} a_i\right) 
	&\leq \int_{a_0}^{\sum_{t=0}^T a_t} f(x) dx~.
	\end{align*}
\end{lemma}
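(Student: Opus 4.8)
The plan is to compare the sum with the integral term by term, exploiting nothing beyond the monotonicity of $f$. First I would introduce the partial sums $S_t := a_0 + \sum_{i=1}^{t} a_i$ for $t = 0, 1, \dots, T$, so that $S_0 = a_0$, $S_T = \sum_{t=0}^{T} a_t$ (the upper limit of the integral), and $S_t - S_{t-1} = a_t$. With this notation the claim reads $\sum_{t=1}^{T} a_t f(S_t) \le \int_{S_0}^{S_T} f(x)\,dx$, and the summand $a_t f\!\left(a_0 + \sum_{i=1}^{t} a_i\right)$ becomes simply $a_t f(S_t)$.

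The key observation is that on each subinterval $[S_{t-1}, S_t]$ the argument is largest at the right endpoint $S_t$, so since $f$ is non-increasing we have $f(x) \ge f(S_t)$ for every $x \in [S_{t-1}, S_t]$. Integrating this pointwise inequality over the subinterval yields
\[
\int_{S_{t-1}}^{S_t} f(x)\,dx \ge f(S_t)\,(S_t - S_{t-1}) = a_t\, f(S_t),
\]
which is exactly the $t$-th term of the sum. Summing over $t = 1, \dots, T$ and telescoping the integrals then gives
\[
\sum_{t=1}^{T} a_t f(S_t) \le \sum_{t=1}^{T} \int_{S_{t-1}}^{S_t} f(x)\,dx = \int_{S_0}^{S_T} f(x)\,dx,
\]
which is the stated bound (note that a monotone $f$ is Riemann integrable on each bounded interval, so the integrals are well defined).

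There is no genuine obstacle here; the whole argument is a one-line monotonicity comparison followed by a telescoping sum. The only two points that deserve a moment's care are the degenerate case in which some $a_t = 0$ — then $S_{t-1} = S_t$, so both the summand $a_t f(S_t)$ and the corresponding integral vanish and the inequality is trivially preserved — and the fact that $f$ must be evaluated at the \emph{right} endpoint $S_t$ of each subinterval for the non-increasing property to furnish a \emph{lower} bound on the integral matching the summand; had the summand involved $f(S_{t-1})$ instead, the inequality would run the other way.
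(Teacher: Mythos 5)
Your proof is correct and is essentially identical to the paper's: both introduce the partial sums $s_t=\sum_{i=0}^t a_i$, bound $a_t f(s_t)=\int_{s_{t-1}}^{s_t} f(s_t)\,dx \le \int_{s_{t-1}}^{s_t} f(x)\,dx$ using that $f$ is non-increasing, and telescope. No differences worth noting.
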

\begin{proof}
	Denote by $s_t=\sum_{i=0}^{t} a_i$. Then, we have
	\begin{align*}
	a_i f(s_i) 
	=  \int_{s_{i-1}}^{s_i} f(s_i) d x 
	\leq \int_{s_{i-1}}^{s_i} f(x) d x~.
	\end{align*}
	Summing over $i=1, \cdots, T$, we have the stated bound.
\end{proof}

\begin{proof}[Proof of Lemma~\ref{lemma:bound_1}]
	First, we separate $\sum_{t=1}^T \|\bta_t\bg_t\|^2 $ into two terms:
	\[
	\sum_{t=1}^T \|\bta_t\bg_t\|^2 
	= \sum_{t=1}^T \|\bta_{t+1}\bg_t\|^2 + \sum_{t=1}^T \langle \bta_t^2 - \bta_{t+1}^2, \bg_t^2\rangle~.
	\]
	Then, we proceed 
	\begin{align}
	\sum_{t=1}^T \langle \bta_t^2 - \bta_{t+1}^2, \bg_t^2\rangle
	& = \sum_{i=1}^{d}\sum_{t=1}^T (\eta_{t,i}^2 - \eta_{t+1, i}^2) g_{t,i}^2 \nonumber\\
	& \leq \sum_{i=1}^{d}\sum_{t=1}^T2\eta_{t,i} g_{t,i}^2 (\eta_{t,i}-\eta_{t+1,i})\nonumber\\
	& \leq  2\sum_{i=1}^{d} \max_{1\leq t \leq T} \eta_{t,i} g_{t,i}^2 \sum_{t=1}^T(\eta_{t,i}-\eta_{t+1,i})\nonumber\\
	& \leq 2\sum_{i=1}^{d} \eta_{1,i} \max_{1\leq t \leq T} \eta_{t,i} g_{t,i}^2 \nonumber\\
	& \leq 4\sum_{i=1}^{d} \eta_{1,i} \max_{1\leq t \leq T} \eta_{t,i}\left(g_{t,i}^2 - \nabla f(\bx_t)_i^2\right) + 4\sum_{i=1}^{d} \eta_{1,i}  \sum_{t=1}^{T}\eta_{t,i}\nabla f(\bx_t)_i^2\nonumber\\
	& \leq 4\sum_{i=1}^{d} \eta_{1,i}^2 \max_{1\leq t \leq T} |g_{t,i}^2 - \nabla f(\bx_t)_i^2| + 4\sum_{i=1}^{d} \eta_{1,i}  \sum_{t=1}^{T}\eta_{t,i}\nabla f(\bx_t)_i^2\nonumber\\
	& \leq \frac{4d \alpha^2}{\beta} \max_{1\leq t \leq T} \| \bg_t - \nabla f(\bx_t) \|^2 + \frac{4\alpha}{\sqrt{\beta}} \sum_{t=1}^{T} \langle \bta_t, \nabla f(\bx_t)^2\rangle~. \label{eq:uni_case}
	\end{align}
	Using Lemma~\ref{lemma:max_bound} on \eqref{eq:uni_case}, for $\delta \in (0,1)$, with probability at least $1- \frac{\delta}{2}$, we have 
	\begin{equation}
	\sum_{t=1}^T \langle \bta_t^2 - \bta_{t+1}^2, \bg_t^2\rangle
	\leq \frac{4d \alpha^2 \sigma^2}{\beta} \ln \frac{2Te}{\delta} + \frac{4\alpha}{\sqrt{\beta}} \sum_{t=1}^{T} \langle \bta_t, \nabla f(\bx_t)^2\rangle~.
	\end{equation}
	%Then we upper bound $\sum_{t=1}^T \|\bta_{t+1}\bg_t\|^2$ on two cases. 
	%When $p > \frac{1}{2}$, by Lemma~\ref{lemma:sum_integral_bounds}, we have 
	%\begin{align*}
	%\sum_{t=1}^T \|\bta_{t+1}\bg_t\|^2 
	%= \sum_{i=1}^{d} \sum_{t=1}^T\eta_{t+1, i}^2 g_{t,i}^2
	%= \sum_{i=1}^{d} \sum_{t=1}^T \frac{\alpha^2 g_{t,i}^2}{\left(\beta + \sum_{j=1}^{t} g_{j,i}^2 \right)^{2p}}
	%\leq \frac{\alpha^2d}{(2p-1) \beta^{2p-1}}~.
	%\end{align*}
	We now upper bound $\sum_{t=1}^T \|\bta_{t+1}\bg_t\|^2$:
	\begin{align}
	\sum_{t=1}^T \|\bta_{t+1} \bg_t \|^2 
	& = \sum_{i=1}^{d} \sum_{t=1}^T \frac{\alpha^2 g_{t,i}^2}{\beta + \sum_{j=1}^{t} g_{j,i}^2} \nonumber\\
	& \leq \sum_{i=1}^{d} \alpha^2 \ln \left(\beta + \sum_{t=1}^{T} g_{t,i}^2 \right) \nonumber\\
	& \leq \alpha^2d \ln \left(\beta +  \frac{1}{d}\sum_{i=1}^{d}\sum_{t=1}^{T} g_{t,i}^2 \right) \nonumber\\
	& =  2\alpha^2d \ln \left( \sqrt {\beta +  \frac{1}{d} \sum_{t=1}^{T} \| \bg_t \|^2} \right) \nonumber\\
	& \leq 2\alpha^2d \ln \left( \sqrt{ \beta + \frac{2T}{d} \max_{1 \leq t \leq T} \|\bg_t - \nabla f(\bx_t) \|^2} + \sqrt{ \frac{2}{d}\sum_{t=1}^T \|\nabla f(\bx_t) \|^2} \right), \label{eq:eps=0}
	\end{align}
	where in the first inequality we used Lemma~\ref{lemma:sum_integral_bounds} and in the second inequality we used Jensen's inequality. Then using Lemma~\ref{lemma:max_bound} on \eqref{eq:eps=0}, with probability at least $1- \frac{\delta}{2}$, we have 
	\[
	\sum_{t=1}^T \|\bta_{t+1} \bg_t \|^2 
	\leq 2\alpha^2d \ln \left( \sqrt{ \beta + \frac{2T \sigma^2}{d} \ln \frac{2Te}{\delta}} + \sqrt{ \frac{2}{d}\sum_{t=1}^T \|\nabla f(\bx_t) \|^2} \right)~.
	\]
	Putting things together, we have the stated bound. 
\end{proof}

Finally, to prove Theorem~\ref{thm:momentum_adagrad}, we need the two following Lemmas.
\begin{lemma}[Lemma~6 in \citet{LiO19}]
	\label{lemma:logsolvex}
	Let $x \geq 0$, $A, C, D \geq 0$, $B>0$, and $x^2 \leq (A+Bx)(C+D\ln (A+Bx))$. Then.
	\[
	x < 32 B^3 D^2 + 2 B C + 8 B^2 D \sqrt{C} + A/B~.
	\]
\end{lemma}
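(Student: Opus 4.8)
The statement is a purely algebraic, self-referential bound: the obstruction is that the unknown $x$ appears inside the logarithm on the right-hand side, so the hypothesis is really an \emph{implicit} inequality in $x$. The plan is to defuse this with two elementary tools: a case split that isolates the term $A/B$, and a sublinear domination of the logarithm that turns the hypothesis into an ordinary quadratic inequality.

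First I would dispose of the easy regime. If $Bx \le A$, then $x \le A/B$, which is already below the claimed bound since the remaining three terms are nonnegative; so it remains to treat the case $Bx > A$, where in particular $x > 0$ and $A + Bx < 2Bx$. Note that when $x>0$ the hypothesis forces the bracket $C + D\ln(A+Bx)$ to be strictly positive, since otherwise the right-hand side would be $\le 0 < x^2$. Hence I may replace $A+Bx$ by the larger $2Bx$ inside the product and divide through by $x$, obtaining the cleaner inequality
\[
x < 2BC + 2BD\ln(A+Bx)~.
\]

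The key analytic step is to dominate the logarithm by a square root via the tangent-line bound $\ln y \le \frac{2}{e}\sqrt{y}$, valid for all $y>0$ with equality at $y=e^2$, which I would justify in one line from concavity. Combined with $A+Bx<2Bx$ this gives $x < 2BC + \frac{4\sqrt2}{e}B^{3/2}D\sqrt{x}$, so that, writing $w=\sqrt x$, the bound becomes the quadratic inequality $w^2 < 2BC + \frac{4\sqrt2}{e}B^{3/2}D\,w$. Solving for $w$ with the quadratic formula, bounding the discriminant via $\sqrt{p+q}\le\sqrt p+\sqrt q$, and squaring back to recover $x=w^2$, I obtain
\[
x < \tfrac{32}{e^2}B^3D^2 + \tfrac{16}{e}B^2D\sqrt C + 2BC~.
\]
Since $e>2$ yields $\frac{32}{e^2}<32$ and $\frac{16}{e}<8$, this is strictly below the claimed right-hand side (the surplus $A/B$ is not even needed in this case), which completes the argument.

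The individual steps are routine; the part I would be most careful about is the bookkeeping of the numerical constants, since the targets $32,8,2$ are tight enough that a naive route — applying $\ln y\le\frac2e\sqrt y$ \emph{before} the case split and then controlling the resulting $\sqrt{A}$ cross-term by AM-GM — overshoots the coefficients of the $A/B$ and $BC$ terms. The case split on whether $Bx \le A$ is precisely what eliminates those cross-terms and makes the constants fall out cleanly. I would also flag the degenerate corner $A=C=D=0$, where the hypothesis forces $x=0$ and the strict inequality becomes vacuous, as the one configuration needing a separate word.
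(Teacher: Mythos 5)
Your proof is correct and follows essentially the same route as the source: the paper itself does not reprove this lemma but imports it as Lemma~6 of \citet{LiO19}, whose argument is exactly your case split on $Bx \le A$ versus $Bx > A$, the concavity bound $\ln y \le \frac{2}{e}\sqrt{y}$ applied after replacing $A+Bx$ by $2Bx$, and the solution of the resulting quadratic in $\sqrt{x}$ (your constants $\frac{32}{e^2}$ and $\frac{16}{e}$ are even slightly sharper than the stated $32$ and $8$). Your flag on the degenerate corner $A=C=D=0$, where the hypothesis forces $x=0$ and the strict inequality $x<0$ fails, is a genuine (if harmless) edge case inherited from the original statement, correctly identified.
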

\begin{lemma}[Lemma~5 in \citet{LiO19}]
	\label{lemma:solve_x}
	If $x \geq 0$, and $x \leq C \sqrt{A+Bx}$, then $x \leq \max \left( 2BC^2, C\sqrt{2A}\right)$. 
\end{lemma}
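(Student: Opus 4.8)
The plan is to eliminate the square root by squaring and then to exploit the elementary fact that a sum of two nonnegative quantities is at most twice their maximum. First I would dispose of the degenerate cases: if $x = 0$ the conclusion is immediate since $2BC^2 \geq 0$, and if $C \leq 0$ then $x \leq C\sqrt{A+Bx} \leq 0$ together with $x \geq 0$ forces $x=0$. Hence I may assume $x > 0$ and $C > 0$. Squaring the hypothesis $x \leq C\sqrt{A + Bx}$ then gives
\[
x^2 \leq C^2(A + Bx) = C^2 A + C^2 B x~.
\]

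Next I would split on which of the two terms on the right-hand side is larger, using $C^2 A + C^2 B x \leq 2\max(C^2 A,\, C^2 B x)$. In the first case, $C^2 A \geq C^2 B x$, so $x^2 \leq 2 C^2 A$ and therefore $x \leq C\sqrt{2A}$. In the second case, $C^2 B x \geq C^2 A$, so $x^2 \leq 2 C^2 B x$, and dividing through by $x > 0$ yields $x \leq 2 B C^2$. Since these two cases are exhaustive (at least one of the inequalities $C^2 A \geq C^2 B x$ or $C^2 B x \geq C^2 A$ must hold), in either situation we obtain $x \leq \max(2BC^2,\, C\sqrt{2A})$, which is exactly the claim.

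I do not expect any genuine obstacle here, as the statement is a short algebraic fact and the two-case argument is entirely self-contained. The only points that need a little care are the preliminary reduction to $x, C > 0$ (so that squaring is legitimate and division by $x$ is valid) and checking that the case split is exhaustive, both of which are straightforward. An alternative route would be to bound $x$ directly by the larger root of the quadratic $x^2 - C^2 B x - C^2 A$ and apply subadditivity of the square root, but this yields the slightly weaker bound $C^2 B + C\sqrt{A}$ rather than the stated maximum, so I would prefer the clean case-split above.
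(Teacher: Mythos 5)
Your proof is correct: squaring the hypothesis and splitting on whether $C^2A$ or $C^2Bx$ dominates immediately yields $x \leq C\sqrt{2A}$ or $x \leq 2BC^2$, and your preliminary reduction to $x>0$, $C>0$ legitimizes both the squaring and the division. Note that this paper states the lemma without proof, importing it as Lemma~5 of \citet{LiO19}; your case-split is essentially the same argument used there (where the equivalent split $A+Bx \leq 2\max(A,Bx)$ is applied inside the square root), so there is nothing to reconcile.
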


\begin{proof}[Proof of Theorem~\ref{thm:momentum_adagrad}]
	By Lemma~\ref{lemma:momentum} and Lemma~\ref{lemma:bound_1}, for $\delta \in (0,1)$, with probability at least $1 - \frac{2}{3} \delta$, we have 
	\begin{align*}
	& \left(1- \frac{4\alpha M (3- \mu)}{\sqrt{\beta} (1- \mu)}\right) \sum_{t=1}^{T} \langle \bta_t, \nabla f(\bx_t)^2 \rangle \\
	& \leq 2(f(\bx_1) - f^{\star}) + \frac{M(3-\mu)}{1- \mu}  \left( K	+ \frac{4d\alpha^2\sigma^2}{\beta} \ln \frac{3Te}{\delta} \right) + \frac{3\| \bta_1 \| \sigma^2 (1- \mu^T)^2}{(1- \mu)^2} \ln \frac{3}{\delta}~. 
	\end{align*}
	where $K$ denotes $2\alpha^2d \ln \left( \sqrt{ \beta + \frac{2T\sigma^2}{d} \ln \frac{2Te}{\delta}} + \sqrt{\frac{2}{d}} \sqrt{\sum_{t=1}^T \|\nabla f(\bx_t) \|^2} \right)$ for conciseness.
	
	Rearranging the terms, we have 
	\begin{equation}
	\label{eq:rhs_ready}
	\begin{aligned}
	& \sum_{t=1}^{T} \langle \bta_t, \nabla f(\bx_t)^2\rangle \\
	& \leq \frac{1}{1- \frac{4\alpha M (3- \mu)}{\sqrt{\beta} (1- \mu)}} \left[2(f(\bx_1) - f^{\star})+ \frac{M(3-\mu)}{1- \mu} \left(K
	+ \frac{4d\alpha^2\sigma^2}{\beta} \ln \frac{3Te}{\delta} \right)+  \frac{3\| \bta_1 \| \sigma^2 (1- \mu^T)^2}{(1- \mu)^2} \ln \frac{3}{\delta} \right]\\
	& \leq 4(f(\bx_1) - f^{\star})+ \frac{2M(3-\mu)}{1- \mu} \left(K 
	+ \frac{4d\alpha^2\sigma^2}{\beta} \ln \frac{3Te}{\delta} \right)+ \frac{3\| \bta_1 \| \sigma^2 (1- \mu^T)^2}{(1- \mu)^2} \ln \frac{3}{\delta}\\
	& \triangleq C(T), 
	\end{aligned}
	\end{equation}
	where in the second inequality we used $4 \alpha \leq \frac{ \sqrt{\beta}(1-\mu)}{2M(3- \mu)}$. 
	Also, we have
	\begin{align*}
	\sum_{t=1}^{T} \langle \bta_t, \nabla f(\bx_t)^2\rangle
	& \geq \sum_{t=1}^{T} \langle \bta_T, \nabla f(\bx_t)^2\rangle\\
	& = \sum_{i=1}^d \frac{ \alpha \sum_{t=1}^T \nabla f(\bx_t)_i^2}{ \sqrt{ \beta + \sum_{t=1}^T g_{t,i}^2 } }\\
	& \geq \sum_{i=1}^d \frac{ \alpha \sum_{t=1}^T \nabla f(\bx_t)_i^2}{ \sqrt{ \beta + 2 \sum_{t=1}^T \nabla f(\bx_t)_i^2  + 2\sum_{t=1}^T (g_{t,i}-\nabla f(\bx_t)_i)^2 }} \\
	& \geq \sum_{i=1}^d \frac{ \alpha \sum_{t=1}^T \nabla f(\bx_t)_i^2}{\sqrt{ \beta + 2 \sum_{i=1}^d \sum_{t=1}^T \nabla f(\bx_t)_i^2  + 2\sum_{i=1}^d \sum_{t=1}^T (g_{t,i}-\nabla f(\bx_t)_i)^2 } } \\
	&  \geq \frac{ \alpha \sum_{t=1}^T \| \nabla f(\bx_t) \|^2}{ \sqrt{\beta + 2 \sum_{t=1}^T \| \nabla f(\bx_t) \|^2 + 2 T\max_{1\leq t \leq T} \| \bg_t - \nabla f(\bx_t) \|^2}}~.
	\end{align*}
	By Lemma~\ref{lemma:max_bound}, with probability at least $1- \delta$, we have 
	\begin{equation}
	\label{eq:to_solve}
	\sum_{t=1}^T \| \nabla f(\bx_t) \|^2  
	\leq \frac{C(T)}{\alpha} \times \sqrt{ \beta + 2\sum_{t=1}^T \| \nabla f(\bx_t) \|^2 + 2T \sigma^2 \ln \frac{3Te}{\delta} }~. 
	\end{equation}
	%In the case of $p > \frac{1}{2}$, solving $\sum_{t=1}^T \| \nabla f(\bx_t) \|^2$ by Lemma~\ref{lemma:solve_x}, we get
	%\begin{equation}
	%\sum_{t=1}^T \| \nabla f(\bx_t) \|^2  \leq \max \left( \left[ \frac{4^p C(T)}{\alpha}\right]^{1-p}, \frac{C(T)}{\alpha} (2\beta + 2T \sigma^2 \ln \frac{3Te}{\delta})\right). 
	%\end{equation}
	%On the other hand, when $p = \frac{1}{2}$, we have 
	\begin{align}
	%\sum_{t=1}^T \| \nabla f(\bx_t) \|^2  
	%& \leq \frac{C(T)}{\alpha} \times \sqrt{ \beta + 2\sum_{t=1}^T \| \nabla f(\bx_t) \|^2 +2T  \sigma^2 \ln \frac{3Te}{\delta} } \nonumber\\
	\text{RHS of \eqref{eq:to_solve}}
	& \leq \frac{C(T)}{\alpha} \times \left( \sqrt{ \beta + 2T\sigma^2 \ln \frac{3Te}{\delta}} + \sqrt{2\sum_{t=1}^T \| \nabla f(\bx_t) \|^2}  \right) \nonumber\\
	& \leq \left[ C+ D\ln \left(A + B\sqrt{ \sum_{t=1}^{T} \| \nabla f(\bx_t) \|^2 }\right) \right] \times \left(A + B \sqrt{ \sum_{t=1}^{T} \| \nabla f(\bx_t) \|^2 }\right), \label{eq:logx}
	\end{align}
	where $A = \sqrt{\beta + 2T \sigma^2 \ln \frac{3Te}{\delta}}$, $B = \sqrt{2}$, 
	$C = \frac{4(f(\bx_1) - f^{\star})}{\alpha}
	+ \frac{8M(3- \mu) d \alpha \sigma^2}{\beta(1- \mu)} \ln \frac{3Te}{\delta} + \frac{3d (1- \mu^T)^2\sigma^2}{\beta (1- \mu)^2} \ln \frac{3}{\delta}$ and $D = \frac{4 \alpha dM (3-\mu)}{1- \mu}$. Using Lemma~\ref{lemma:logsolvex}, we have that 
	\begin{equation}
	\sqrt{\sum_{t=1}^T \| \nabla f(\bx_t) \|^2 } \leq 32B^3D^2 + 2BC + 8B^2D\sqrt{C} + \frac{A}{B}~. 
	\end{equation}
	We use this upper bound in the logarithmic term of \eqref{eq:logx}. Thus, we have \eqref{eq:to_solve} again, this time with 
	\begin{align*}
	C(T) 
	& = C+ D\ln (2A + 32B^4 D^2 + 2B^2C + 8B^3D\sqrt{C} )\\
	& = O\left(\frac{1}{\alpha} + \frac{d \left(\alpha+ \sigma^2 \left(\alpha \ln \frac{T}{\delta} + \frac{\ln \frac{1}{\delta}}{1- \mu}\right)\right)}{1- \mu}\right)~.
	%= O \left( \frac{1}{\alpha} +  \frac{d\sigma^2((1- \mu)\ln \frac{T}{\delta}  +  \ln \frac{1}{\delta} )  }{(1- \mu)^2}\right)~. 
	\end{align*}
	Solving \eqref{eq:logx} by Lemma~\ref{lemma:solve_x} and lower bounding $\sum_{t=1}^T \| \nabla f(\bx_t) \|^2 $ by $T \min_{1\leq t \leq T} \| \nabla f(\bx_t) \|^2$, we get the stated bound. 
\end{proof}

\end{document}